\documentclass[11pt, a4paper, logo, copyright]{googledeepmind}

\pdftrailerid{redacted}

\usepackage{natbib}
\setcitestyle{numbers,open={[},close={]}}
\usepackage[utf8]{inputenc} 
\usepackage[T1]{fontenc}    
\usepackage{hyperref}       
\usepackage{url}            
\usepackage{booktabs}       
\usepackage{kantlipsum, lipsum}
\usepackage{xcolor}
\definecolor{darkblue}{rgb}{0, 0, 0.5}
\hypersetup{colorlinks=true, citecolor=darkblue, linkcolor=darkblue, urlcolor=darkblue}
\usepackage{etoc}
\usepackage{dsfont}
\usepackage{cleveref}
\usepackage{gdm-colors}

\usepackage{amssymb,amsmath,amsthm,bbm}
\usepackage{verbatim,float,url,dsfont}
\usepackage{graphicx,subfigure,psfrag}
\usepackage{algorithm, algpseudocode}
\usepackage{algorithmicx}
\usepackage{bibentry}
\algtext*{EndWhile}
\algtext*{EndIf}
\algtext*{EndFor}
\algtext*{EndProcedure}
\usepackage{mathtools,enumitem}
\usepackage{multirow}
\usepackage{ragged2e}
\usepackage{xr-hyper}
\usepackage{array}
\usepackage{stmaryrd,scalerel}
\usepackage[dvipsnames]{xcolor}
\usepackage{tikz}
\usepackage[utf8]{inputenc} 
\usepackage{accents}
\usepackage{xcolor}
\newcommand{\bunderline}[1]{\underaccent{\bar}{#1}}

\usepackage{booktabs}       
\usepackage{nicefrac}         
\usepackage{microtype}      
\usepackage{fancyhdr}

\newtheoremstyle{mytheoremstyle}
{.5em}  
{.5em}  
{\itshape} 
{}     
{\bfseries} 
{.}    
{.5em} 
{}     
\theoremstyle{mytheoremstyle}
\newtheorem{theorem}{Theorem}
\newtheorem{lemma}[theorem]{Lemma}
\newtheorem{corollary}[theorem]{Corollary}
\newtheorem{proposition}[theorem]{Proposition}
\newtheorem{remark}[theorem]{Remark}

\newtheorem{assumption}[theorem]{Assumption}

\def\E{\mathbb{E}}

\usepackage{mathtools}

\usepackage{mathtools}
\mathtoolsset{showonlyrefs}\usepackage{afterpage}

\usepackage{epigraph}

\usepackage[most,skins,theorems]{tcolorbox}

\tcbset{
  aibox/.style={
    width=\linewidth,
    top=6pt,
    bottom=0pt,
    colback=blue!6!white,
    colframe=black,
    colbacktitle=black,
    enhanced,
    center,
    attach boxed title to top left={yshift=-0.1in,xshift=0.15in},
    boxed title style={boxrule=0pt,colframe=white,},
  }
}
\newtcolorbox{AIbox}[2][]{aibox,title=#2,#1}

\tcbuselibrary{listings, breakable}
\tcbset{
    graybox/.style={
        colback=gray!20,  
        colframe=black,   
        boxrule=1pt,      
        arc=4mm,          
        width=\textwidth, 
        before=\vspace{10pt}, 
        after=\vspace{10pt},  
        center             
    }
}

\title{CollabEval: Statistically Efficient Collaborative Model Evaluation via Matrix Completion}

\author[1]{Adam Fisch}
\author[1]{Daniel Deutsch}
\author[1]{Joshua Maynez}
\author[2]{Alekh Agarwal}
\author[2]{Jonathan Berant}
\author[1]{William Cohen}
\author[2]{Amir Globerson}
\author[1]{Jacob Eisenstein}

\affil[1]{Google DeepMind}
\affil[2]{Google Research}

\begin{abstract}
Evaluating generative AI models is a routine, but resource-intensive, process that is conducted over and over again during the course of model development. In this work, we propose Collaborative Evaluation (CollabEval), a simple, effective, and principled method for exploiting dependencies between historical runs of different models on the same tasks to improve statistical efficiency. Specifically, our approach treats model evaluation as a matrix completion problem over an $M \times N$ matrix of evaluation scores, where $M$ is the total number of models and $N$ is the total number of evaluation prompts. We assume that a subset of these $M$ models are targeted for evaluation. For these target models only a small fraction, $p$, of prompts has been annotated with evaluation scores. Leveraging recent results in prediction-powered inference, we build a low-rank approximation of the score matrix, and use the reconstructed values as control variates in a manner that guarantees unbiased estimates of the true evaluation metric mean, in addition to statistically valid confidence intervals. Empirically, across a wide range of datasets, models, and sparsity levels $p$, we find that CollabEval substantially reduces the mean confidence interval size, and the mean squared error of the point estimate, compared to baseline methods at the same annotation budget.\looseness=-1

\end{abstract}

\usepackage[parfill]{parskip}
\begin{document}
\maketitle

\section{Introduction}
\label{sec:intro}
Evaluating large-scale generative AI models typically involves gathering prompts from an input distribution, generating model outputs, and scoring all outputs with a reliable rater (e.g., an LLM-as-a-judge or human annotator). Though straightforward in principle, obtaining reliable results requires gathering a non-trivial number of data points, which can make evaluation costly. While recent work has explored the viability of using more efficient automatic metrics and autoraters as  proxies for expensive human annotations, even the \emph{inference} step for most large AI models carries substantial overhead (e.g., especially when long tool-call chains are involved), motivating methods that bypass generation entirely.\looseness=-1

Evaluation costs compound when many evaluations are conducted simultaneously or continuously over long periods of time---for instance, when comparing fine-tuned model variants, or when monitoring for regressions against new checkpoints. The systems under comparison often share architectures, training data, or system prompts, so their performances are correlated. Standard methods, however, usually treat each of these evaluations as a distinct estimation problem, and fail to leverage this correlational structure. This leads to a suboptimal use of the available evaluation budget.\looseness=-1

\begin{figure}[!t]
    \centering
    \includegraphics[width=.9\linewidth]{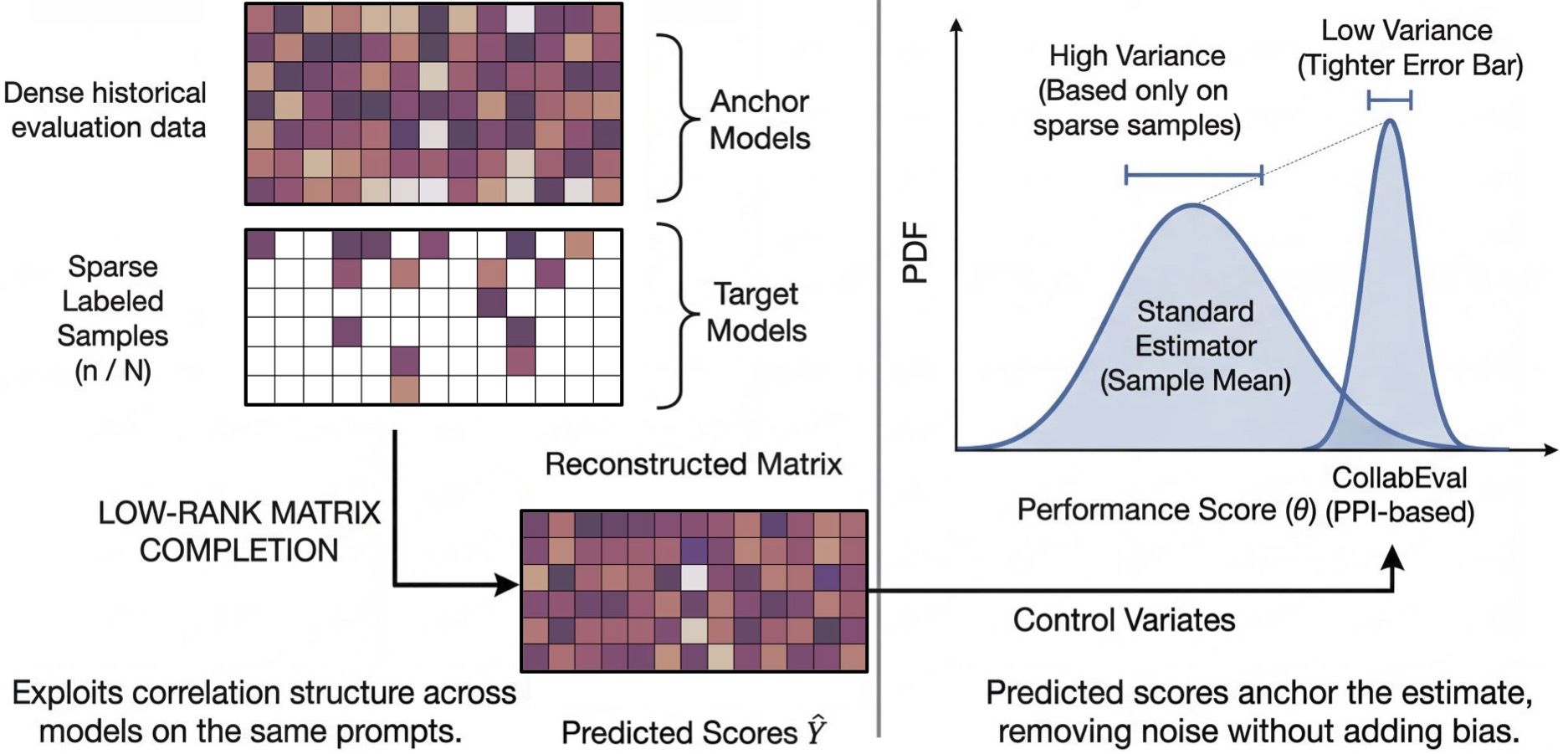}
    \vspace{-3pt}
    \caption{An illustration of the core concepts underlying CollabEval. CollabEval takes in an $M \times N$ matrix of evaluation scores, divided into dense historical data from "Anchor Models" and sparse labeled samples from new "Target Models". By applying a type of cross-fold matrix completion (Section~\ref{sec:cross_fold_matrix}), we reconstruct the matrix to predict missing scores. These predictions  are then used as control variates within a framework based on cross-prediction-powered inference (CrossPPI; see \citet{zrnic2025cross}) to obtain more accurate estimates with tighter, valid confidence intervals.\looseness=-1}
    \label{fig:intro}
   \vspace{-3pt}
\end{figure}

In this work, we propose \emph{Collaborative Evaluation} (CollabEval): a lightweight approach that exploits dependencies between model output scores to increase the \emph{effective sample size} of an evaluation, thereby \emph{reducing the set of evaluated prompts} while preserving accuracy and confidence. Unlike methods that reduce \emph{labeling costs} via cheaper autoraters but still incur \emph{generation costs}~\cite{boyeau2024autoeval, fisch2024stratified, angelopoulos2025costoptimalactiveaimodel}, skipped prompts in our setting incur nearly \emph{zero cost}---requiring neither inference nor scoring. Our method has two parts. First, we reframe evaluation with partially observed scores as a collaborative filtering task: we represent the full set of models $\times$ prompts $\rightarrow$ scores as an $M \times N$ matrix $S$, where only a fraction $p$ of entries are observed for the target models we want to evaluate (a subset of the $M$ total models; other non-target models may be historical). We use matrix completion to try to impute the unobserved scores, transferring knowledge across related evaluations. Then  we use these imputed scores as control variates to reduce variance, while guaranteeing unbiased estimates with asymptotically valid confidence intervals that are simple to compute.
Importantly, the theoretical validity of these intervals relies entirely on the control variate framework; CollabEval maintains valid  coverage even if the score matrix lacks low-rank structure and the attempted imputation is noisy.\looseness=-1 

We develop CollabEval by combining collaborative filtering techniques with prediction-powered inference \cite{angelopoulos2023prediction}---specifically cross-prediction-powered inference~\cite{zrnic2025cross}---and introduce technical extensions for rigorously estimating both \emph{pointwise} metrics (e.g., the absolute accuracy of model $A$) and \emph{pairwise} differences (e.g., the accuracy delta between models $A$ and $B$). Notably, unlike prior work such as item response theory (\Cref{sec:related}), our framework places few restrictions on the evaluation metric, the matrix completion algorithm, or the composition or size of the \emph{anchor model} set (the models for which we have dense annotations).
Furthermore, when combined with power tuning~\cite{angelopoulos2023ppi++},  CollabEval  is guaranteed to have an asymptotic variance less than or equal to classical estimation (Proposition~\ref{prop:power_tuning_distribution}). This variance reduction scales with the squared covariance between the true target model scores and the imputed predictions from matrix completion; if imputed values are uncorrelated with true values, the estimator gracefully falls back to classical estimation. The magnitude of the variance reduction in realistic practical scenarios is an empirical question. To that end, we evaluate CollabEval on five diverse text generation benchmarks across a variety of real-world experimental settings, and show that it can \textbf{reduce confidence interval widths by up to 30\%} compared to classical inference.

\begin{AIbox}{Summary of main contributions}
Model evaluation suffers from high variance when the sample size is small. However, different models are also often closely correlated. CollabEval mitigates evaluation variance by reframing model evaluation as a matrix completion problem---using results from other, historical, models to impute scores for skipped evaluation prompts from a (potentially) much larger evaluation set. We derive control variates from these predictions via cross-prediction-powered inference~\cite{zrnic2025cross}, yielding lower mean squared evaluation error and tighter confidence intervals.\looseness=-1
\end{AIbox}

\section{Related work}
\label{sec:related}

CollabEval builds on recent work in prediction-powered inference~\citep{angelopoulos2023prediction, angelopoulos2023ppi++, zrnic2025cross}, which can reduce the cost of model evaluation by combining cheap-and-noisy "weak" labelers, such as LLM-as-judge, with more expensive and accurate annotations, such as human ratings~\citep{boyeau2024autoeval,gligoric2024can,fisch2024stratified,angelopoulos2025costoptimalactiveaimodel}. This line of work has focused on reducing the cost of \emph{labeling} model outputs, but in all cases still requires paying the computational price to \emph{generate} the outputs. Furthermore, at least in framing, these approaches have always required an explicit, standalone label predictor. Instead, the key insight in our work  is to use matrix completion to produce noisy annotations without the cost of generation---and effectively use the historical scores of other models as the only "features" needed to predict a target model's performance (though we note that additional covariates for each prompt / model pair could also be readily incorporated into our framework as part of a \emph{contextual} matrix completion algorithm).\looseness=-1

The use of matrix reconstruction is motivated by growing evidence that model evaluation scores exhibit strong low-rank structure---not only across benchmarks, where a handful of evaluations can predict the rest~\cite{ye-etal-2023-predictable, zhang-etal-2024-collaborative, papailiopoulos2026benchpress}, but also at the \emph{item level}, where individual prompt scores are highly correlated across models~\cite{zhou2025on}. CollabEval exploits this finer-grained redundancy to reduce the number of items evaluated \emph{within} each benchmark, while still providing rigorous statistical guarantees---and puts  this general collaborative filtering style of approach to evaluation  on solid  footing. Prior work on  evaluation has also leveraged such cross-model correlations, specifically through Item Response Theory~\cite[IRT;][]{ martinez2019item, rodriguez-etal-2021-evaluation, pmlr-v235-maia-polo24a, kipnis2025metabench, hofmann2025fluid, polo2026richinsightscheapsignals}, which factors a matrix of items and scores to estimate an "ability" parameter per model. CollabEval is based on the same intuitions, but rather than estimating a logit-space ability parameter, it enables accurate and precise estimates of any evaluation characteristic of interest. Importantly, matrix completion can be seen as a generalization of IRT, at least in terms of predictive power: while a valid IRT model implies a very low rank score (log odds) matrix, the converse does not necessarily hold, allowing our framework to capture model-item interactions that IRT might miss. Of course, IRT models can also be used as drop-in substitutes for the matrix completion algorithm in CollabEval (and we report results for 2PL IRT models in our experiments for binary tasks).\looseness=-1

Finally, we note that another class of related work is based on coresets~\cite[see, e.g., Anchor Points;][]{vivek-etal-2024-anchor}, in which evaluation data is downsampled to a few key datapoints. In contrast, CollabEval leverages the unlabeled data rather than throwing it away---and furthermore does so in a manner that yields \emph{unbiased} estimates and \emph{statistically valid confidence intervals} without strong assumptions about the underlying data. Empirically, CollabEval also exhibits greater empirical stability across tasks.
\section{Background}
\label{sec:background}

 Collaborative Evaluation is built upon the statistical framework of prediction-powered inference \cite[PPI;][]{angelopoulos2023prediction, angelopoulos2023ppi++}. PPI is designed to efficiently estimate population parameters (in our case, the mean model performance, $\theta = \E[Y]$) in settings where acquiring the true labels $Y$ is expensive or difficult. The standard approach of calculating a sample mean from a small set of $n$ annotated data points is unbiased, but often suffers from high variance, yielding confidence intervals that are too wide for meaningful decision-making.
PPI provides a principled way to reduce this variance by leveraging a much larger, cheaply-obtained set of $N + n$ ``predicted'' labels $\hat{Y}$ on both the original labeled set of $n$ examples, and an additional set of $N$ unlabeled examples---though these predictions may be systematically biased. PPI then combines the high-cost, unbiased data with the low-cost, biased data using a ``rectified'' mean, as shown in Equation 
\eqref{eq:ppi_full}.\looseness=-1
\begin{equation}
\label{eq:ppi_full}
\hat{\theta}_{\textrm{ppi}} = \underbrace{\frac{1}{N} \sum_{i=1}^N \hat{Y}_{i}}_{\text{prediction mean}} ~+~ \underbrace{\frac{1}{n} \sum_{i=N + 1}^{N + n} Y_{i} - \hat{Y}_{i}}_{\text{prediction bias rectifier}},
\end{equation}
where the random variables $\{\hat{Y}_i\}_{i=1}^{N+n}$ and $\{Y_i\}_{i=N+1}^{N+n}$ are respectively assumed to be independent and identically distributed (iid).
Notice that this construction then has two useful properties. First, the PPI estimator, $\hat{\theta}_{\textrm{ppi}}$, is unbiased (i.e., $\mathbb{E}[\hat{\theta}_{\textrm{ppi}}] = \theta$), regardless of the accuracy of $\hat{Y}$ with respect to the true $Y$. In expectation, the rectifier term cancels out the  bias of the predicted mean. Second, when $N \gg n$, the variance of $\hat{\theta}_{\textrm{ppi}}$ no longer primarily depends on the  variance of the true scores, $\text{Var}(Y)$, but rather on the variance of the prediction \emph{residuals}, $\text{Var}(Y - \hat{Y})$. If the predicted scores $\hat{Y}$ are strongly correlated with the true scores $Y$, the residual variance will be small. This will then result in a more precise estimate of $\theta$, and significantly tighter confidence intervals, for the same small labeling budget $n$.

\textbf{Why bias correction is important.} To take a moment to understand why the rectification step in PPI is important, consider the danger of a naive combined estimator. If one were to simply use the  predictions $\hat{Y}$ as drop-in replacements for the unobserved labels, the resulting estimator 
\begin{equation}
\hat{\theta}_{naive} = \frac{1}{N + n}\left(\sum_{i=1}^N \hat{Y}_i + \sum_{i=N+1}^{N+n} Y_i\right)
\end{equation}
could dramatically reduce variance. However, this comes at the cost of introducing a potentially severe bias whenever the predictor systematically under- or over-estimates the true performance,  and cause the mean squared error to be dominated by $(\frac{N}{N + n})^2(\mathbb{E}[\hat{Y}] - \mathbb{E}[Y])^2$. PPI sidesteps this by instead using the predictions as a control variate~\cite{Ripley87} to drive down the variance, without incurring bias.\looseness=-1

\subsection{Cross-prediction-powered inference}
\label{sec:cross_ppi}
Conventional PPI relies on the availability of an independent, pre-existing model to generate the predicted values $\hat{Y}$. If this is not the case, the simplest thing to do is to split the $n$ labeled data points into a proper training and test set, where the training set is used to train a predictor that yields $\hat{Y}$ (e.g., given features $X$ for label $Y$), and the test set is used for estimating $\theta$ per Equation~\eqref{eq:ppi_full}. This is not, however, the most \emph{efficient} way to use the labeled data points, when $n$ is small.

Cross-prediction-powered inference \cite[CrossPPI;][]{zrnic2025cross} adapts PPI to  use the entire labeled dataset \emph{both} for training the predictor that gives $\hat{Y}$ \emph{and} for estimating $\theta$. CrossPPI achieves this through a variation of cross-validation. First, the labeled data points are randomly partitioned into $K$ disjoint, exchangeable folds, $\{\mathcal{I}_1, \ldots, \mathcal{I}_K\}$.
Let $\hat{Y}^{(k)}$ denote the predictions generated from a predictor that is trained on all folds $j \neq k$. These ``cross-predictions'' are then simply aggregated across all folds:
\begin{equation}
\label{eq:cross_ppi}
\hat{\theta}_{\textrm{crossppi}} = \underbrace{{\frac{1}{NK} \sum_{i=1}^N \sum_{k=1}^K\hat{Y}_i^{(k)}\vphantom{\sum_{i \in \mathcal{I}_k}}}}_{\text{cross-prediction mean}} + \underbrace{\frac{1}{n} \sum_{k=1}^K\sum_{i \in \mathcal{I}_k} Y_i - \hat{Y}_i^{(k)}.}_{\text{cross-prediction bias rectifier}}
\end{equation}
Under mild   assumptions, \citet{zrnic2025cross} show that as $n$ and $N$ grow, $\hat{\theta}_\mathrm{crossppi}$ is unbiased, low variance, and normally distributed: a fact useful for deriving confidence intervals.\looseness=-1

In the next section, we adapt CrossPPI to matrix completion using an $M \times N$ matrix of partially observed scores for $M$ models across $N$ inputs. This yields statistically efficient estimates with asymptotically valid confidence intervals for both individual model means and pairwise differences.
\section{Collaborative Model Evaluation via Matrix Completion}
\label{sec:method}
The main practical challenge of PPI-based methods, and \emph{the core problem that CollabEval addresses}, is how to efficiently acquire a high-quality set of predictions $\hat{Y}$ in the first place. As outlined in Section~\ref{sec:intro}, our main hypothesis is that for model evaluation in particular, the evaluation scores are highly correlated across related models: observing a subset of  scores from these other models is sufficient to construct high-quality predictions $\hat{Y}$ for a new model of interest, for prompts that we do not even have to run the new model on at all. 

At a high-level,  CollabEval is fairly simple: we use matrix completion to impute missing scores, and then correct for the bias that the noisy predictions may have. The following sections  carefully setup this up. In Section~\ref{sec:setting} we formally define the setting. Then, in Section~\ref{sec:cross_fold_matrix}, we describe the specifics of our cross-fold matrix completion algorithm, and resulting estimator. Finally, in Section~\ref{sec:interval} we derive confidence intervals with rigorous statistical guarantees (however, we defer the formal statements and proofs of the guarantees to Appendices~\ref{app:theory} and \ref{app:proofs}). See also Appendix~\ref{sec:implementation} for full implementation details.\looseness=-1

\subsection{A motivating example}
\label{sec:example}
To start, consider a scenario where we want to evaluate a target model $T$ across a diverse set of prompts, while leveraging historical evaluations from two anchor models, $A_1$ and $A_2$. We can assume that $T$'s performance correlates differently  with that of $A_1$ and $A_2$, depending on the type of input. For example, $T$ might be comparable to $A_1$ on "Type 1" prompts (e.g., math reasoning), but then more like $A_2$ on "Type 2" prompts (e.g., creative writing). Relying on either $A_1$ or $A_2$ alone as a global control variate would be suboptimal. Ideally, an oracle control variate would dynamically route predictions (i.e., use $A_1$ for  Type 1 prompts and $A_2$ for  Type 2 prompts) to obtain small residual variance across the entire dataset.\looseness=-1

The general motivation for using matrix completion techniques here is to naturally approximate this sort of "control variate routing" behavior, or potentially something better. For example, in the low-rank matrix factorization approach we take (see Section~\ref{sec:cross_fold_matrix}), observing just a few scores from model $T$ allows us to project it into a shared latent space alongside the other anchor models and the prompts. The predicted score $\hat{Y}$ for model $T$ for any unobserved prompt is a linear combination of the anchor models' scores on that same prompt, where the weighting  depends on the learned latent factors.\looseness=-1 

\subsection{Setting}
\label{sec:setting}
Let $S \in \mathbb{R}^{M \times N}$ be a matrix of evaluation scores, where $M$ is the number of models, $N$ is the total number of input prompts, and $S_{ij}$ is the score of model $i$'s response to prompt $j$. Assuming the $N$ prompts are sampled i.i.d. from a distribution of interest, each $S_{ij}$ is a random variable.
We use $S_i$ to represent the marginal score distribution for model $i$, meaning $S_{i} \overset{d}{=} S_{ij}$ for all $j \in \{1, \ldots, N\}$. 

We partition the models into two disjoint sets: \emph{anchor models} $\mathcal{A} \subset \{1, \ldots, M\}$, for which we have historical data (having evaluated many entries in an anchor row $i \in \mathcal{A}$), and \emph{target models} $\mathcal{T} \subseteq \{1, \ldots, M\}$, which are new models with initially unobserved scores.\footnote{In a standard real-world evaluation pipeline, we might assume that the anchor model set is periodically refreshed  to ensure the historical data remains relevant to the current evaluation targets.}

For each target model $i$, our goal is to estimate  $\theta_i = \mathbb{E}[S_i]$. Since model evaluations are ultimately often used to do model comparisons, we also estimate the mean performance \emph{difference} between the target model $i$ and any other model $j \neq i$, defined as $\Delta_{ij} = \mathbb{E}[S_i - S_j]$. The comparison model $j$ can be either a target or anchor model. To reduce the cost of evaluating all $N$ prompts for the target models, we observe only a small subset of $n_i$ independently selected indices  from $\{1, \dots, N\}$ for each target model $i \in \mathcal{T}$.\footnote{We slightly abuse the prior notation from Section~\ref{sec:background}:  $n_i \leq N$ now denotes a \emph{subset} of the full dataset.\looseness=-1}
We use $\Omega \in \{0, 1\}^{M \times N}$ to denote the binary matrix of observed indices:
$$\Omega_{ij} = 
\begin{cases} 
1 & \text{if model $i$'s evaluation score for prompt $j$ is observed,} \\
0 & \text{otherwise.}
\end{cases}$$
We assume that $\Omega$ is independent from the prompts. For the entries $(i,j)$ where $\Omega_{ij} = 0$, $S_{ij}$ is imputed via a procedure that we call \emph{cross-fold matrix completion}, which we describe next.

\subsection{Cross-fold matrix completion}
\label{sec:cross_fold_matrix}
Let $\mathcal{M} \colon \mathbb{R}^{M \times N} \times \{0, 1\}^{M \times N} \to \mathbb{R}^{M \times N}$ be a generic matrix completion algorithm that accepts a partially observed matrix $S$ and a binary  mask $\Omega$, and returns a matrix $\hat{S}$ where all missing entries $(i, j)$ for which $\Omega_{ij} = 0$ have been imputed. We use an iterative algorithm based on SVD \cite[see, e.g.,][]{troyanskaya2001missing, mazumder2010spectral} that recursively updates missing entries  in the matrix $S$ by using low-rank approximations for $\hat{S}$ (see Appendix~\ref{sec:implementation} for  details, and comparisons to other alternatives). It is also worth noting that the computational overhead of this matrix completion is small for the sizes of evaluation data matrices that we consider, even when repeated multiple times. As the imputed scores in $\hat{S}$ depend on the observed scores in the original $S$, we use a $K$-fold cross-completion strategy  based on CrossPPI in Section~\ref{sec:cross_ppi}. This will allow us to estimate the bias in our imputed evaluation scores, and correct for it.\looseness=-1

First, we identify the set of active prompt indices $\mathcal{U} = \bigcup_{i \in \mathcal{T}} \mathcal{J}_i$, which is the set of prompts for which at least one target model has an observation. We randomly partition $\mathcal{U}$ into $K$ disjoint folds, $\{\mathcal{U}_1, \dots, \mathcal{U}_K\}$.
For each target model $i$, this splits its observed indices $\mathcal{J}_i$ into $K$ folds, $\{\mathcal{J}_i^{(1)}, \dots, \mathcal{J}_i^{(K)}\}$, where $\mathcal{J}_i^{(k)} = \mathcal{J}_i \cap \mathcal{U}_k$. We then construct a sequence of $K$ cross-prediction masks, $\{\Omega^{(1)}, \dots, \Omega^{(K)}\}$, where each mask $\Omega^{(k)}$ holds out the $k$-th fold of labeled prompts across all target models. Specifically, for each fold $k$, we set $\Omega^{(k)}_{ij} = 0$ for all $i \in \mathcal{T}$ and all $j \in \mathcal{U}_k$, while keeping $\Omega^{(k)}_{ij} = \Omega_{ij}$ otherwise.

Applying the completion algorithm to each mask yields a corresponding set of prediction matrices, $\hat{S}^{(k)} = \mathcal{M}(S, \Omega^{(k)})$. We then aggregate these  matrices into a single prediction matrix $\hat{Y}$ using the held-out prediction for observed entries and the cross-fold average for unobserved entries:
\begin{equation}
\label{eq:cross_fold_matrix}
\hat{Y}_{ij} = 
\left\{
\begin{array}{ll@{\quad}l}
\hat{S}^{(k)}_{ij} & \text{if } j \in \mathcal{J}_i^{(k)} \text{ and } i \in \mathcal{T} & \text{(cross-fold prediction independent of } S_{ij}\text{)} \\
\displaystyle \frac{1}{K} \sum_{k=1}^K \hat{S}^{(k)}_{ij} & \text{otherwise} & \text{(average cross-fold prediction)}
\end{array}
\right.
\end{equation}
Finally, we estimate the performance  $\hat{\theta}_i$ using a slightly rewritten form of CrossPPI,\looseness=-1
\begin{equation}
\label{eq:cross_matrix_ppi}
\hat{\theta}_i = \frac{1}{N} \sum_{j=1}^N \hat{Y}_{ij} + \frac{1}{|\mathcal{J}_i|} \sum_{j \in \mathcal{J}_i} (S_{ij} - \hat{Y}_{ij})
= \frac{1}{|\mathcal{J}_i|} \sum_{j \in \mathcal{J}_i} S_{ij} - \left(\frac{1}{|\mathcal{J}_i|} \sum_{j \in \mathcal{J}_i} \hat{Y}_{ij} - \frac{1}{N}\sum_{j=1}^N \hat{Y}_{ij}\right),
\end{equation}
and the difference in performance between models $i$ and $j$ as $\hat{\Delta}_{ij} = \hat{\theta}_i - \hat{\theta}_j$.

Note that Equation~\eqref{eq:cross_matrix_ppi} calculates the mean prediction using all $N$ items (versus only the $N -n$ unlabeled items where $\Omega_{ij} = 0$ as standard PPI would do), and rewrites the estimator in a classical control variate form~\citep{Ripley87}. We use this form for convenience (it can be shown that the two versions have equivalent asymptotic variance when $\hat{Y}_{ij}$ is scaled by an optimal constant of choice).

\subsection{Confidence interval estimation}
\label{sec:interval}

When evaluating models, we typically want to meaningfully quantify the uncertainty in our estimates. CollabEval allows us to easily compute confidence intervals by relying on plug-in empirical estimates of the  covariance between the model evaluation scores.

Let $n_i = |\mathcal{J}_i|$ be the number of observed labels for model $i$, and $n_{ij} = |\mathcal{J}_i \cap \mathcal{J}_j|$ be the number of overlapping observed labels between models $i$ and $j$. We then estimate the covariance matrix $\hat{\Sigma}$ for the joint vector of CollabEval predictions $\sqrt{N}\hat{\theta} \in \mathbb{R}^M$  (which, in Appendix~\ref{app:theory},  we show converges to the true asymptotic covariance matrix $\Sigma$ under mild assumptions) using:\looseness=-1
%
\begin{equation}
\label{eq:empirical_covariance}
\hat{\Sigma}_{ij} = \frac{N n_{ij}}{n_i n_j} \widehat{\mathrm{Cov}}(S_i, S_j) + \left(\frac{N n_{ij}}{n_i n_j} - 1\right) \left[ \widehat{\mathrm{Cov}}(\hat{Y}_i, \hat{Y}_j) - \widehat{\mathrm{Cov}}(\hat{Y}_i, S_j) - \widehat{\mathrm{Cov}}(S_i, \hat{Y}_j) \right],
\end{equation}
where the empirical covariances $\widehat{\mathrm{Cov}}(\cdot, \cdot)$ involving true scores $S$ are calculated over the intersecting observed sets (e.g., $j \in \mathcal{J}_i \cap \mathcal{J}_j$), and the empirical covariances of the  predicted scores $\hat{Y}$ are calculated over the all $N$ prompts.
We can then construct confidence intervals with $(1-\alpha)$  coverage for both the individual model means and the pairwise differences as follows:\looseness=-1
\begin{equation}
\mathcal{C}_\alpha(\hat{\theta}_i) = \left[ \hat{\theta}_i \pm z_{1-\alpha/2}\sqrt{\frac{\hat{\Sigma}_{ii}}{N}} \right] 
\quad\text{and}\quad 
\mathcal{C}_\alpha(\hat{\Delta}_{ij}) = \left[ \hat{\Delta}_{ij} \pm z_{1-\alpha/2}\sqrt{\frac{\hat{\Sigma}_{ii} + \hat{\Sigma}_{jj} - 2\hat{\Sigma}_{ij}}{N}} \right],
\end{equation}
where $z_{1-\alpha/2}$ is the standard normal quantile. Crucially, as long as the matrix completion algorithm satisfies a basic stability condition (i.e., its predictions converge as data grows), these intervals are statistically valid. We summarize this in the following informal theorem.\looseness=-1

\begin{theorem}[CI Coverage of CollabEval: Informal]
\label{thm:informal_coverage}
Under mild stability assumptions on the completion algorithm $\mathcal{M}$, as the total number of prompts $N$ and the labeled subsets $n_i$ grow large, the estimators $\hat{\theta}_i$ and $\hat{\Delta}_{ij}$ are asymptotically normally distributed. Consequently, the confidence intervals $\mathcal{C}_\alpha(\hat{\theta}_i)$ and $\mathcal{C}_\alpha(\hat{\Delta}_{ij})$ converge to valid $(1-\alpha)$ marginal coverage of the true parameters, $\theta_i$ and $\Delta_{ij}$.
\end{theorem}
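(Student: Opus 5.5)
The plan follows the CrossPPI argument of \citet{zrnic2025cross}: replace the random cross-fold predictors by deterministic limits, so the estimator becomes a sum of i.i.d.\ terms plus negligible remainders. Then apply a multivariate CLT and Slutsky.

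\textbf{Stability assumption and oracle estimator.} The assumption on $\mathcal{M}$ is that, for each target row $i$ and fold $k$, the imputation map $x \mapsto \hat{S}^{(k)}_{i}(x)$ learned from $\Omega^{(k)}$ converges in $L^2(P)$ to a fixed function $f_i$ as $N, n_i \to \infty$. The same limit is required for every $k$, which holds because the folds are exchangeable. Define the oracle estimator
\begin{equation}
\tilde{\theta}_i = \frac{1}{n_i}\sum_{j\in\mathcal{J}_i} \bigl(S_{ij} - f_i(x_j)\bigr) + \frac{1}{N}\sum_{j=1}^N f_i(x_j).
\end{equation}
Since $\Omega$ is independent of the prompts, $\tilde{\theta}_i$ is unbiased for $\theta_i$. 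Writing $\pi_i = \lim n_i/N$ and $\pi_{ij} = \lim n_{ij}/N$, the vector $\sqrt{N}(\tilde{\theta}-\theta)$ is a normalized sum over prompts $j$ of independent vectors. The $i$-th coordinate of prompt $j$'s term is $\pi_i^{-1}\Omega_{ij}(S_{ij}-f_i(x_j)) + f_i(x_j)$, centered, with $\Omega$ conditioned on. The Lindeberg CLT, using finite second moments of scores and predictions, gives convergence to $\mathcal{N}(0,\Sigma)$.

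\textbf{Computing $\Sigma$.} Expanding the covariance, the cross term contributes $\pi_{ij}/(\pi_i\pi_j)$ times $\mathrm{Cov}(S_i - f_i, S_j - f_j)$. The $f$--$f$ term and the mixed terms then collapse, which reproduces exactly the limit of Equation~\eqref{eq:empirical_covariance} with $N n_{ij}/(n_in_j) \to \pi_{ij}/(\pi_i\pi_j)$.

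\textbf{Main obstacle: the remainder.} We must show $\sqrt{N}(\hat\theta_i - \tilde\theta_i) = o_P(1)$. Using the control-variate form of Equation~\eqref{eq:cross_matrix_ppi}, split the remainder by fold. For fold $k$, the relevant quantity is the difference between the $\mathcal{J}_i^{(k)}$ empirical mean and the full-sample mean of $g_k = \hat{S}^{(k)}_i - f_i$. On the held-out entries, $\hat{S}^{(k)}$ is independent of $S_{ij}$, so conditioning on the training folds makes this a centered average with variance $O(\|g_k\|^2/n_i)$. This is $o(1/N)$ by the stability assumption, given $n_i/N$ bounded below.

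The subtle part is that the full-sample average over all $N$ prompts includes the fold-$k$ training points themselves. Here I would use the same device as CrossPPI: averaging predictions over $k$ for unobserved entries, together with an $L^2$-stability (leave-fold-out) condition, controls this dependence. If that fails directly, I would strengthen the assumption to convergence of $\hat{S}^{(k)}$ uniformly over prompts in probability.

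\textbf{Conclusion.} Next, show $\hat\Sigma \to \Sigma$ in probability via the law of large numbers on the intersecting sets. The predicted covariances are replaced by oracle ones at $o_P(1)$ cost, again by stability. Slutsky's lemma then gives asymptotic normality of the studentized $\hat\theta_i$. For $\hat\Delta_{ij}$, apply the continuous mapping theorem with the contrast $e_i - e_j$, which yields variance $\Sigma_{ii}+\Sigma_{jj}-2\Sigma_{ij}$. Coverage of both intervals converges to $1-\alpha$.
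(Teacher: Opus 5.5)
Your architecture is the paper's. You use the oracle estimator $\tilde\theta$ built from population-limit imputations, a Lindeberg--Feller CLT conditional on $\Omega$, and the same $\Sigma$. You then bound $\sqrt{N}(\hat\theta-\tilde\theta)$ fold by fold, show $\hat\Sigma$ is consistent, apply Slutsky, and use the contrast $\mathbf{e}_i-\mathbf{e}_j$.

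The gap is the remainder step, which you leave open exactly where it is hardest. First, the per-fold object is not ``the $\mathcal{J}_i^{(k)}$-mean minus the full-sample mean of $g_k$.'' Writing $\mathcal{G}_i=[N]\setminus\mathcal{J}_i$, it is
\[
V_{N,k}=\sum_{j\in\mathcal{J}_i^{(k)}}\sqrt{N}\Bigl(\frac{1}{N}-\frac{1}{n_i}\Bigr)g_k(x_j)+\frac{1}{K\sqrt{N}}\sum_{j\in\mathcal{G}_i}g_k(x_j),
\]
since row $i$'s fold-$k$ training prompts carry $g_{k'}$, not $g_k$. The two weight totals cancel (exactly when $|\mathcal{J}_i^{(k)}|=n_i/K$). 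So what must be shown is that held-out and never-observed errors share a conditional mean and have small conditional fluctuations.

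Neither repair you offer delivers this as stated.
\begin{itemize}
\item CrossPPI's device works because its unlabeled points and held-out fold never enter the fit of $\hat f^{(k)}$, so their errors are conditionally i.i.d. Here the completion is transductive. Every column of $\mathcal{G}_i$ enters the fold-$k$ fit through its anchor entries. Under IID sampling, some of these columns also carry other targets' scores from outside $\mathcal{U}_k$, so the algorithm need not treat them like held-out columns at all.
\item The same objection touches your held-out argument. The remainder contains no $S_{ij}$, so independence of $\hat S^{(k)}_{ij}$ from $S_{ij}$ is beside the point. What matters is independence of $g_k(x_j)$ from the fit, and held-out columns' anchor entries are also in the fit.
\item Your fallback, $\max_j|g_k(x_j)|=o_P(1)$, only bounds the $\mathcal{G}_i$ sum by $\sqrt{N}\cdot o_P(1)$. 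You would need either a rate $o_P(N^{-1/2})$ or an explicit centering argument. One such argument is exchangeability of held-out and never-observed columns when the algorithm sees them identically, as under paired sampling. Without it, any systematic difference between the two column types survives at the $\sqrt{N}$ scale.
\end{itemize}

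The paper closes this step by building the needed structure into its assumption and analysis, rather than deriving it for iterative SVD.
\begin{itemize}
\item Assumption~\ref{ass:stable} is stated directly on the entries $\hat S^{(1)}_{ij}$, as a conditional variance given the fold mask. Its limit $h_i(X_j)$ depends only on prompt $j$'s anchor scores.
\item The never-observed entries go into $V_{N,k}$ with weight $1/(K\sqrt{N})$.
\item All terms of $V_{N,k}$ are treated as conditionally independent given $\Omega^{(k)}$. This gives $\mathrm{Var}(V_{N,k}\mid\Omega^{(k)})\le C\,\mathrm{Var}(\hat S^{(k)}_i-\hat{\bunderline{S}}_i\mid\Omega^{(k)})$.
\item The conclusion follows from $\mathbb{E}\min(1,|V_N|)\le\sum_k\mathbb{E}\min\bigl(1,\sqrt{\mathrm{Var}(V_{N,k}\mid\Omega^{(k)})}\bigr)\to 0$.
\end{itemize}
To complete your proof, you need a condition of this kind: held-out and never-observed imputation errors that are conditionally centered with matching means and have vanishing conditional variance. $L^2(P)$ convergence of a learned map plus uniform convergence is not enough. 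You were right that this is the delicate point, and the paper also resolves it by assumption rather than by derivation.
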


A formal statement of Theorem~\ref{thm:informal_coverage} and other  related results are provided in Appendix~\ref{app:theory}.
We note that the empirical covariance estimates above can also be used to compute optimal "power tuning" weights \citep{angelopoulos2023ppi++}, a technique that scales $\hat{Y}_{ij}$ by a scalar $\lambda_i \in \mathbb{R}$ to ensure that the asymptotic variance (and resulting confidence interval width) is never worse than that of the classical sample mean. We provide a derivation of the optimal weights $\lambda^* \in \mathbb{R}^M$ to use for estimating both means and differences in means (they can differ) in Appendix~\ref{sec:powertuning}. In the remainder of the paper, we  use these scaled estimators by default.\looseness=-1

\section{Experiments}
\label{sec:experiments}

\subsection{Datasets}

The datasets we use cover a diverse range of settings and metrics. Specifically, we assess instruction-following capabilities using \textbf{AlpacaEval 2.0}~\cite{alpaca_eval}, multiple-choice multitask language understanding via \textbf{MMLU}~\cite{hendrycks2020measuring}, grounded generation and evidence-supported correctness with \textbf{Attributed Question Answering (AQA)}~\cite{bohnet2023attributed}, machine translation across 55 languages using \textbf{WMT24++}~\cite{deutsch-etal-2025-wmt24}, and repository-level patch generation via \textbf{SWE-bench}~\cite{jimenez2024swebench}.\footnote{In Appendix \ref{sec:singular_values} we show that the evaluation matrices for these tasks indeed empirically exhibit low effective rank. Across all the five datasets, the first 16 singular vectors (or fewer) explain over 50\% of the variance in the observed scores.}
For AlpacaEval 2.0 and MMLU, we designate specific target models based on recent leaderboard submissions while using older or remaining models as anchors; conversely, for AQA, WMT24++, and SWE-bench, we dynamically construct our target and anchor sets using leave-one-out or leave-two-out evaluation protocols over the available systems. See Appendix~\ref{sec:datasets} for additional details. In the main text, we report performance metrics averaged across all datasets, with error bars denoting the \textbf{min} and \textbf{max}  results.  Individual task results are in Appendix~\ref{app:additional_results}.\looseness=-1

\subsection{Sampling strategies}
\label{sec:sampling}
We consider two straightforward strategies for sampling the  observation matrix $\Omega$:
\begin{enumerate}[leftmargin=*]
    \item \textbf{Independent sampling (IID):} For each target model, the subset of observed indices is drawn uniformly at random. This can be helpful for the matrix completion stage of CollabEval: by observing a more diverse set of prompts across the target models, we can better estimate the underlying low-rank structure, especially in settings where the set of anchor models is small, or does not fully span the subspace of the new target models.\looseness=-1
    \item \textbf{Paired sampling (Paired):} Here we observe the exact same subset of prompts for each target model ($\mathcal{J}_i = \mathcal{J}_j$). This  can be very advantageous when estimating $\Delta_{ij}$, the {difference} in model performance, if the scores $S_{i}$ and $S_{j}$ are highly correlated in ways that the predictions $\hat{{Y}}_{i}$ and $\hat{{Y}}_j$ do not fully capture. While this is a good variance reduction technique, it does also limit the diversity of observed data available to the matrix completion algorithm, and makes the anchor models critical for being able to reconstruct the target rows.
\end{enumerate}

\subsection{Baselines}

We compare CollabEval against the \textbf{classic sample mean} baseline, which simply computes the empirical mean over the observed samples. We apply this baseline using both the independent (IID) and paired sampling strategies described in Section~\ref{sec:sampling}. 
We then also consider three baselines (the results of which are deferred to Appendix~\ref{app:additional_results}, along with several other additional experimental results):\looseness=-1
\begin{itemize}[leftmargin=*]
\item \textbf{Naive Imputation}: Here we  directly construct a dense evaluation matrix by imputing all missing entries with one step of matrix completion.
We then naively calculate the sample mean and variance without any corrections. While this aggressively reduces variance, it is vulnerable to systematic bias, and, in general, will not produce valid confidence intervals.\looseness=-1


\item \textbf{Anchor Points \cite{vivek-etal-2024-anchor}}: A coreset-based approach that selects a small, representative subset of prompts to evaluate based on historical anchor model scores.\footnote{We note that the original method  uses real-valued model confidence scores; because these are  unavailable for the tasks here, we adapt our implementation to use only the vector of raw evaluation scores, which matches the inputs to CollabEval.}  We compute the mean performance scores for new models over these selected points only, and skip the rest. This yields a point estimate, but does not natively provide valid confidence intervals (and like naive imputation, can also be biased).\looseness=-1

\item \textbf{PPI (Anchor Mean)}: A baseline that replaces our matrix completion step with a simple, untrained control variate. Specifically, we compute the mean score of all available anchor models for prompt $j$ and use it as the drop-in prediction for any target model's score.\looseness=-1
\end{itemize}

Appendix \ref{sec:mc_ablation} also compares CollabEval across four alternative, drop-in, matrix completion algorithms: IterativeSVD (default), a neural network, nuclear norm minimization, and a 2-PL IRT model.\looseness=-1

\subsection{Metrics}
We compute all metrics over $10k$ bootstrap trials, where we resample the $N$ base examples from each dataset with replacement, and set the target ground-truth parameters $\theta$ and $\Delta$ to be the averages over the $N$ base examples. Our evaluation focuses on four main metrics.\looseness=-1
\begin{itemize}[leftmargin=*]
\item \textbf{CI Coverage:} The empirical probability that the nominal $(1-\alpha)$ confidence interval produced by a given method contains the true full-sample parameter ($\theta_i$ or $\Delta_{ij}$).
\item \textbf{\% Reduction in CI Size (or MSE):} The relative decrease in average confidence interval width achieved by a given method compared to the classic independent sampling baseline. When comparing methods that do not produce valid confidence intervals (e.g., the naive imputation estimator), we instead report the \% reduction in mean squared error (MSE).

\item \textbf{Effective Sample Fraction:} The labeling budget the classic independent baseline would require to achieve the exact same MSE as a given method evaluated at fraction $p$. This is plotted as a fraction of the total dataset $N$; values strictly greater than $p$ indicate that the evaluated method yields the statistical power of a larger independent labeled sample.
\item \textbf{Fractional MSE:} The MSE of a given method at fraction $p$, divided by the MSE at  $p=1$ (recall that we evaluate MSE using bootstrap resampling, so the MSE at $p=1$ is $>0$). This quantifies how much of the precision of the full evaluation we can recover with a smaller budget. Lower values are better, with $1$ representing parity with the full size evaluation.\looseness=-1
\end{itemize}

\subsection{Results}

\begin{figure}[!t]
    \centering
    \includegraphics[width=1\linewidth]{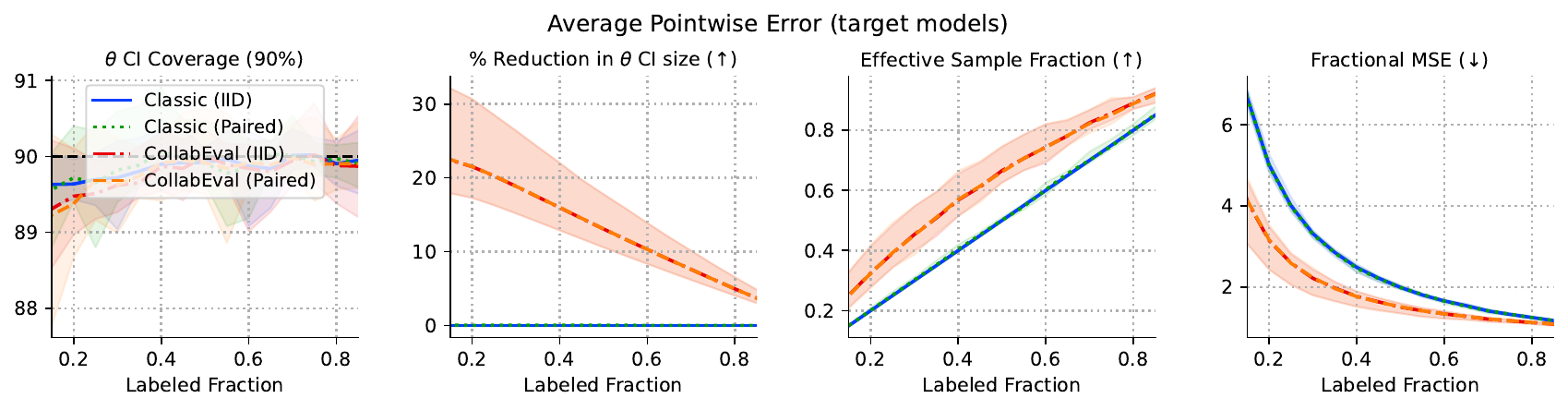}
    \includegraphics[width=1\linewidth]{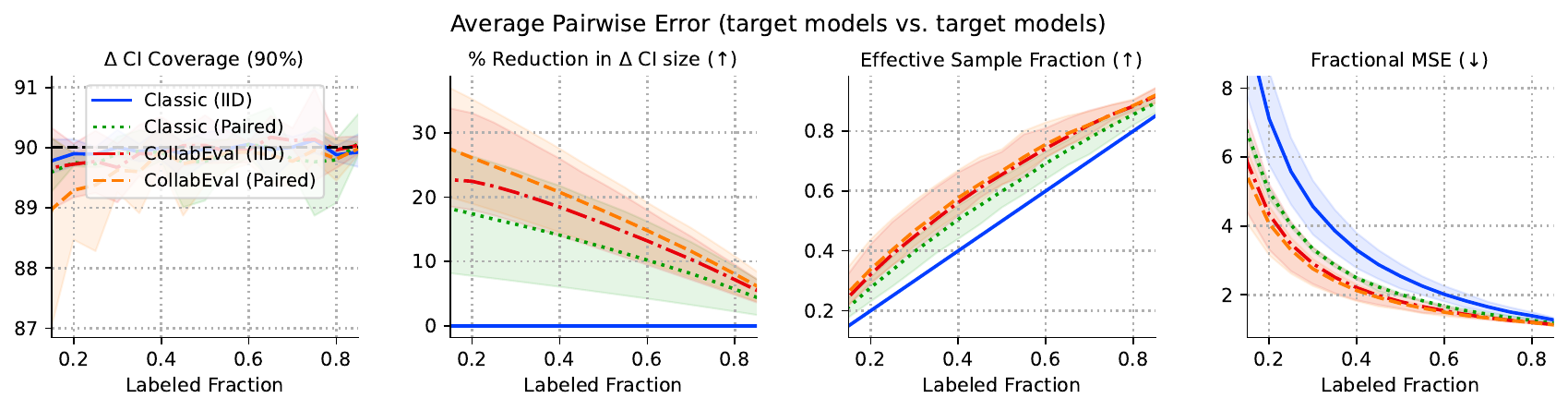}
    \includegraphics[width=1\linewidth]{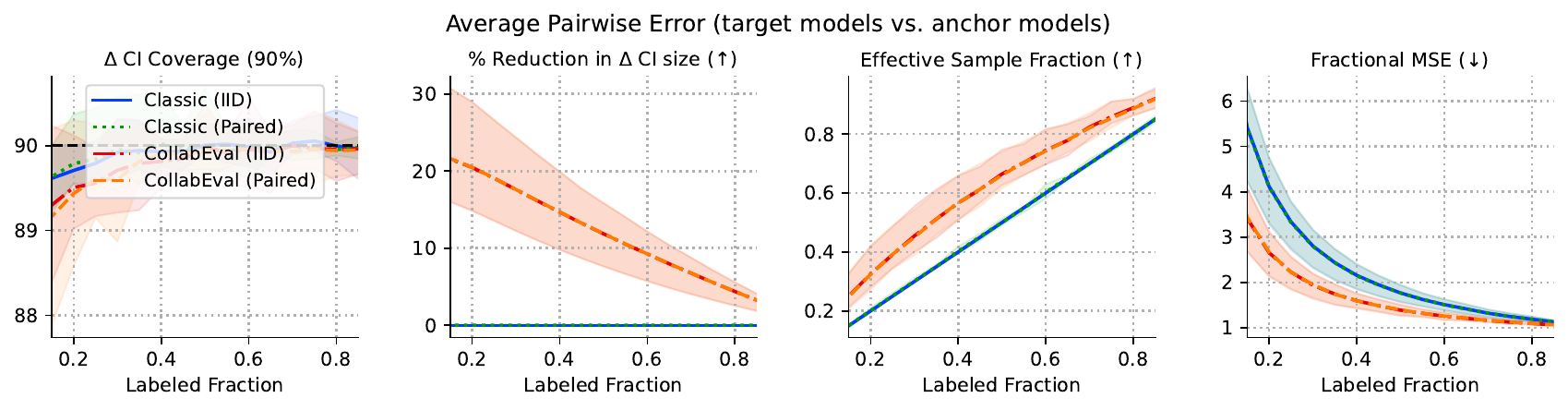}
    \vspace{-12pt}
    \caption{\textbf{Performance metrics averaged across all five datasets.} Rows show pointwise estimation of target models ($\hat{\theta}_i$, top), pairwise differences between target models ($\hat{\Delta}_{ij}$ for $i,j \in \mathcal{T}$, middle), and pairwise differences between target and anchor models ($\hat{\Delta}_{ij}$ for $i \in \mathcal{T}, j \in \mathcal{A}$, bottom). CollabEval consistently improves efficiency over classical baselines (and the additional baselines in Appendix~\ref{sec:ablations})---yielding tighter CIs, lower MSE, and higher effective sample fractions---while maintaining coverage. IID and paired sampling perform equivalently for the top and bottom rows, but for target-vs-target comparisons (middle) paired sampling with CollabEval achieves the highest efficiency, particularly for lower labeled fractions $p$.\looseness=-1}
    \label{fig:averaged_results}
\end{figure}

\textbf{Confidence interval coverage.} CollabEval consistently approaches the nominal coverage rate of 90\% (dashed lines in the first column of Figure~\ref{fig:averaged_results}). This empirically validates our theoretical results that were used to derive the confidence intervals in Section~\ref{sec:method}. While our guarantees are asymptotic in the total number of labels, we note that the coverage empirically quickly approaches the desired coverage rate even in lower sample settings ($p < 0.2$). In contrast, the naive imputation baseline  given in Appendix~\ref{sec:ablations} fails to maintain valid coverage.\looseness=-1

\textbf{Estimation efficiency (CI Size and MSE).} CollabEval demonstrates a significant improvement in statistical efficiency compared to the classical empirical estimators. For individual mean estimation ($\theta_i$), CollabEval yields substantially tighter confidence intervals than the baseline, showing an average reduction in CI width of over 20\%  when the labeled fraction $p$ is small. This effect is even more pronounced on specific datasets; for example, in the AQA evaluation (see Appendix~\ref{sec:per_task}), we observe a reduction in CI width of over 30\% at lower fractions. Similar trends are observed for pairwise differences ($\Delta_{ij}$), where CollabEval reduces the average CI width by over 30\% in the target-vs-target setting, driven by particularly strong performance on the WMT and AQA benchmarks (Appendix~\ref{sec:per_task}). Furthermore, the "Effective Sample Fraction" curves (Figure~\ref{fig:averaged_results}, third column) consistently stay above the $y=x$ parity line, indicating that CollabEval requires fewer ground-truth labels to achieve the same MSE as the baseline. For instance, on average across datasets, when the classic estimator requires 60\% of the labeled data to achieve a certain precision for $\theta$, CollabEval can achieve the same precision with approximately 45\% of the labeled data. Finally, our additional results in Appendix~\ref{sec:ablations} show that CollabEval is significantly better at reducing the MSE than both the naive estimator (which can even sometimes \emph{hurt MSE}) and the PPI (Anchor Mean) baseline---highlighting both the importance of explicit bias correction, as well as the use of the cross-fold matrix completion approach over an untrained alternative. Similarly, Appendix~\ref{sec:ablations} shows that CollabEval consistently yields lower MSE than the Anchor Points baseline, and exhibits greater empirical stability across tasks and labeling fractions.\looseness=-1

\textbf{Effect of the labeled fraction size.} The gains of CollabEval are generally most pronounced  when the labeled fraction $p$ is small. As seen in the "\% Reduction in CI Size" plots, the relative advantage of CollabEval peaks when the labeled data is scarce, but in sufficient quantity to learn from (e.g., $p \approx 0.1-0.2$), and gradually diminishes as $p$ grows. This behavior is expected: as the labeled set becomes larger, two things happen: (1) the variance of the classical estimator naturally decreases, and (2) the relative amount of extra unlabeled data CollabEval can leverage diminishes. However, even at moderate labeled fractions (e.g., $p=0.5$), CollabEval often retains a substantial efficiency advantage---reducing the average CI width by $\sim$10-20\% depending on the specific estimand, and $>20$\% on tasks like AQA and WMT (Appendix~\ref{sec:per_task}). As $p$ gets closer to $1.0$, all methods converge.\looseness=-1

\textbf{Effects of the sampling strategy.} For individual means ($\theta_i$), paired and independent sampling perform virtually identically across our experiments, as seen in the top row of Figure~\ref{fig:averaged_results}. However, for difference estimates between target models only (Figure~\ref{fig:averaged_results}, middle row), the benefits of paired sampling are pronounced. As evaluation scores on the same prompts are often correlated across models, the classical paired estimator is competitive, and outperforms the classical IID estimator (and suggests that paired sampling should generally be the natural approach for these types of comparative evaluation tasks). Building upon this, CollabEval with paired sampling further improves upon the classical paired estimator, yielding the tightest confidence intervals and lowest error overall. Note that these effects disappear when looking at the difference estimates between target models and anchor models (Figure~\ref{fig:averaged_results}, bottom row), since here paired and IID sampling are effectively the same (as every target model observation is automatically paired with an anchor model observation).\looseness=-1

\textbf{Effects of the anchor model set.} CollabEval relies on correlations between anchor and target model evaluations. In Figure~\ref{fig:anchor_ablation} in Appendix~\ref{app:anchor_set}, we ablate the anchor set's size and composition at a fixed 50\% labeled data fraction. We compare three selection strategies for an anchor set of size $k$: highest-scoring (Top-$k$), lowest-scoring (Bottom-$k$), and randomly selected models (Random-$k$). Confidence interval coverage remains stable across all sizes and strategies, consistent with our theoretical guarantees. Efficiency improves as the anchor set grows and becomes more predictive of target performance, though CollabEval maintains positive variance reduction even when the anchor model set is small (for all selection strategies). With respect to differences in anchor selection strategies, random selection (Random-$k$) yields the highest variance reduction, while selecting only the lowest-scoring models (Bottom-$k$) is the least effective strategy (though it still reduces variance relative to classical estimation). These results indicate that while   the \emph{degree} of variance reduction that can be achieved by CollabEval naturally varies depending on the anchor models, its performance is quite robust in general.\looseness=-1

\section{Conclusion}
\label{sec:conclusion}

This work introduces Collaborative Evaluation (CollabEval), a practical and rigorous framework for reducing model evaluation cost. Formalizing the widely reported phenomenon in recent literature that different model outputs and evaluation scores are often highly correlated, CollabEval casts evaluation as a matrix completion problem, and uses historical data to impute scores that are then used as control variates within a cross-prediction-powered inference framework. We theoretically establish that CollabEval yields unbiased estimates with asymptotically valid confidence intervals for both individual model performance and pairwise model comparisons, regardless of the accuracy of the underlying completion (and with only mild assumptions on the algorithm used), or the data distribution. Empirically, we demonstrate across a range of benchmarks that CollabEval significantly outperforms standard estimation techniques. By effectively converting historical evaluations into increased statistical power, CollabEval substantially increases evaluation precision at nearly no extra cost to the evaluator, and serves as a simple but  principled resource for practitioners tasked with the continuous evaluation and monitoring of rapidly evolving generative AI systems.\looseness=-1

\bibliographystyle{plainnat}
\bibliography{bib}
\clearpage

\appendix
\setcounter{theorem}{0}
\numberwithin{theorem}{section}
\addcontentsline{toc}{chapter}{Appendices} 
\etocsetnexttocdepth{2}
\localtableofcontents
\clearpage
\section{Theoretical results}
\label{app:theory}

\subsection{Asymptotic normality}
The purpose of this section is to prove that the estimate $\hat{\theta}$ is asymptotically multivariate normal, which will then justify the form of confidence interval proposed in Section~\ref{sec:interval}. 
To start, we require a stability condition (Assumption \ref{ass:stable}; adapted from \citet{zrnic2025cross} to our setting) on the matrix completion algorithm $\mathcal{M}$. Informally, this requires that as we collect  data, the cross-fold predictions become insensitive to their specific fold  and converge to a fixed population-level imputation $\hat{\bunderline{S}}_{ij}$ that depends only on the observed data for item $j$.\looseness=-1
\begin{assumption}[Imputation stability]\label{ass:stable}
Let $K$ be the number of folds, let $n_i = |\mathcal{J}_i|$ be the number of observed scores for a target model $i$, and let $\hat{\bunderline{S}}_{ij} = h_i(X_j)$ where $X_j = \{S_{i',j} \colon \Omega_{i'j} = 1, i' \in \mathcal{A}\}$ be some deterministic population-limit prediction that depends exclusively on the observed anchor model scores on the $j$-th prompt. Furthermore, let $\hat{S}^{(1)} = \mathcal{M}(S, \Omega^{(1)})$ be the matrix completion evaluated on a random fold mask $\Omega^{(1)}$. We say that $\mathcal{M}$ is stable if as $N \to \infty$ and $n_i / N \to p_i \in (0, 1]$:$$ \sqrt{K \mathrm{Var}\left( \hat{S}^{(1)}_{ij} - \hat{\bunderline{S}}_{ij} \mid \Omega^{(1)} \right)} \overset{L_1}{\longrightarrow} 0,$$
where $\xrightarrow{L_1}$ denotes convergence in mean.
\end{assumption}
Intuitively, this ensures that the row entries of the cross-fold-imputed matrix $\hat{Y}$ from Equation \eqref{eq:cross_fold_matrix} behave like asymptotically i.i.d. random variables as the matrix dimensions grow, as the cross-fold predictions converge to a single, stable predictor that is only a function of the covariates (in this case, the anchor model scores) for prompt $j$. In practice, this form of stability is a typical property of appropriately regularized matrix completion methods.\footnote{It is, for example, possible to argue using standard concentration and perturbation results from high-dimensional statistics (e.g., the Matrix Bernstein inequality and Wedin’s $\sin \Theta$ theorem)  that our cross-fold iterative SVD algorithm concentrates toward a deterministic linear projection onto the principal singular subspace of the population covariance matrix, with mild assumptions. This analysis, however, is outside the primary scope of this paper; we provide this theoretical sketch for intuition and rely on our empirical results to demonstrate the stability and coverage of the algorithm in practice.\looseness=-1}



Under this condition, we establish the following central limit theorem.\looseness=-1
\begin{theorem}[Multivariate CLT for $\hat{\theta}$]
\label{thm:collab_clt}
Suppose Assumption \ref{ass:stable} holds, and that the random variables $(S_{ij}, \hat{\bunderline{S}}_{ij})$ have finite covariance.  Let $n_i = |\mathcal{J}_i|$, $n_j = |\mathcal{J}_j|$, and $n_{ij} = |\mathcal{J}_i \cap \mathcal{J}_j|$, where as $N \to \infty$, $n_i / N \xrightarrow{} p_i \in (0, 1]$, $n_j / N \xrightarrow{} p_j \in (0, 1]$, and $n_{ij} / N \xrightarrow{p} p_{ij} \in [0, 1]$. 
Then as $N \to \infty$, 
\begin{gather}
\sqrt{N}(\hat{\theta} - \theta) \xrightarrow{d} \mathcal{N}(0, \Sigma), \quad \text{where} \nonumber \\
\label{eq:limiting_cov}
\Sigma_{ij} = \frac{p_{ij}}{p_i p_j} \mathrm{Cov}\left(S_i, S_j\right) + \left(\frac{p_{ij}}{p_i p_j} - 1\right) \left[ \mathrm{Cov}\left(\hat{\bunderline{S}}_i, \hat{\bunderline{S}}_j\right) - \mathrm{Cov}\left(\hat{\bunderline{S}}_i, S_j\right) - \mathrm{Cov}\left(S_i, \hat{\bunderline{S}}_j\right) \right].
\end{gather}
\end{theorem}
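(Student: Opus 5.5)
The plan is to split $\hat\theta_i - \theta_i$ into an oracle term, built from the deterministic limit predictor $\hat{\bunderline{S}}_{ij} = h_i(X_j)$, plus a remainder that vanishes after multiplying by $\sqrt N$. The oracle term is then handled by a multivariate CLT for i.i.d.\ sums over prompts, after conditioning on $\Omega$.

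\textbf{Oracle decomposition.} By Equation~\eqref{eq:cross_matrix_ppi},
\begin{equation}
\hat\theta_i - \theta_i = \underbrace{\frac{1}{n_i}\sum_{j\in\mathcal J_i}(S_{ij}-\hat{\bunderline{S}}_{ij}) + \frac1N\sum_{j=1}^N \hat{\bunderline{S}}_{ij} - \theta_i}_{\tilde\theta_i - \theta_i} \;+\; R_i ,
\end{equation}
where
\begin{equation}
R_i = \frac1N\sum_{j=1}^N(\hat Y_{ij}-\hat{\bunderline{S}}_{ij}) - \frac1{n_i}\sum_{j\in\mathcal J_i}(\hat Y_{ij}-\hat{\bunderline{S}}_{ij}).
\end{equation}

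\textbf{Step 1: the remainder is negligible.} I would show $\sqrt N R_i \to 0$ in probability, following \citet{zrnic2025cross}. Split each sum over folds $k$. Within a fold, conditional on $\Omega^{(k)}$ and on the labeled data outside $\mathcal U_k$, the held-out errors $\hat S^{(k)}_{ij}-\hat{\bunderline{S}}_{ij}$ for $j\in\mathcal U_k$ are exchangeable (i.i.d.\ given the training data). Their conditional mean appears in both the full-$N$ average and the labeled average, so it cancels to leading order; any leftover mean mismatch can be controlled by the same variance bound. The centered fluctuation of each fold sum has conditional second moment of order $(N/K)\,\mathrm{Var}(\hat S^{(k)}_{ij}-\hat{\bunderline{S}}_{ij}\mid\Omega^{(k)})$. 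Multiplying by $\sqrt N/n_i \asymp 1/\sqrt N$ and summing over $K$ folds gives a bound of order $\sqrt{K\,\mathrm{Var}(\cdot\mid\Omega^{(1)})}$, which tends to $0$ in $L_1$ by Assumption~\ref{ass:stable}. For unlabeled entries $\hat Y_{ij}$ is a fold average, and Jensen's inequality gives the same bound. Markov's inequality then yields $\sqrt N R_i = o_p(1)$. This is the main obstacle: the dependence among the $\hat Y_{ij}$ induced by training, and the overlap of folds across target models sharing $\mathcal U$, must be handled carefully. I would condition fold by fold and use that $\hat{\bunderline{S}}$ depends only on $X_j$.

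\textbf{Step 2: CLT for the oracle term.} Condition on $\Omega$, which is independent of the prompts. Write
\begin{equation}
\sqrt N(\tilde\theta_i-\theta_i) = \frac1{\sqrt N}\sum_{j=1}^N \xi_{ij}, \qquad \xi_{ij} = \frac{N\,\Omega_{ij}}{n_i}(S_{ij}-\hat{\bunderline{S}}_{ij}-\mu_i) + (\hat{\bunderline{S}}_{ij}-\mathbb E\hat{\bunderline{S}}_i),
\end{equation}
with $\mu_i = \mathbb E[S_i-\hat{\bunderline{S}}_i]$. Given $\Omega$, the vectors $(\xi_{ij})_i$ are independent across $j$, have mean zero, and have finite second moments. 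A Lindeberg--Feller multivariate CLT (via Cramér--Wold) applies because the weights $N\Omega_{ij}/n_i$ are bounded by $1/p_i + o(1)$. The conditional covariance is
\begin{equation}
\frac1N\sum_j \mathrm{Cov}(\xi_{ij},\xi_{i'j}) = \frac{Nn_{ii'}}{n_in_{i'}}\,\mathrm{Cov}(S_i-\hat{\bunderline{S}}_i,\,S_{i'}-\hat{\bunderline{S}}_{i'}) + \mathrm{Cov}(S_i-\hat{\bunderline{S}}_i,\hat{\bunderline{S}}_{i'}) + \mathrm{Cov}(\hat{\bunderline{S}}_i,S_{i'}-\hat{\bunderline{S}}_{i'}) + \mathrm{Cov}(\hat{\bunderline{S}}_i,\hat{\bunderline{S}}_{i'}).
\end{equation}
Here I used $\tfrac1N\sum_j \tfrac{N\Omega_{ij}}{n_i}=1$. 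Expanding and collecting terms gives
\begin{equation}
\tfrac{p_{ii'}}{p_ip_{i'}}\mathrm{Cov}(S_i,S_{i'}) + \left(\tfrac{p_{ii'}}{p_ip_{i'}}-1\right)\bigl[\mathrm{Cov}(\hat{\bunderline{S}}_i,\hat{\bunderline{S}}_{i'})-\mathrm{Cov}(\hat{\bunderline{S}}_i,S_{i'})-\mathrm{Cov}(S_i,\hat{\bunderline{S}}_{i'})\bigr]
\end{equation}
in the limit, using $Nn_{ii'}/(n_in_{i'})\to p_{ii'}/(p_ip_{i'})$ in probability. This is exactly Equation~\eqref{eq:limiting_cov}. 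Since the limiting conditional law does not depend on $\Omega$, dominated convergence of characteristic functions removes the conditioning. Slutsky's theorem combined with Step 1 then gives $\sqrt N(\hat\theta-\theta)\xrightarrow{d}\mathcal N(0,\Sigma)$.
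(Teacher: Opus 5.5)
Your proposal is correct and follows essentially the paper's own route. The paper likewise introduces the idealized estimator $\tilde\theta$ built from $\hat{\bunderline{S}}_{ij}$ and proves its CLT conditionally on $\Omega$ via Lindeberg--Feller, with exactly your covariance algebra (using $\sum_j \Omega_{ij} = n_i$ and $\sum_j \Omega_{ij}\Omega_{kj} = n_{ik}$); it then shows $\sqrt{N}(\hat\theta - \tilde\theta) \xrightarrow{p} 0$ by the fold-by-fold stability argument of \citet{zrnic2025cross} and combines the two with Slutsky. The differences are only technical: you use Markov's inequality where the paper uses the $\min(1,\cdot)$ truncation (Fact~1 of \citet{zrnic2025cross}), and you remove the conditioning on $\Omega$ via characteristic functions rather than a direct appeal to Slutsky; the cancellation of the conditional mean of the held-out errors that you rely on is assumed implicitly in the paper's Jensen step as well.
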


As $\hat{\theta}$ is asymptotically normal, any linear transformation of $\hat{\theta}$ is also asymptotically normal. Specifically, for any  ${a} \in \mathbb{R}^M$, we have $\sqrt{N} ({a}^\top \hat{{\theta}} - {a}^\top {\theta}) \xrightarrow{d}  \mathcal{N}(0, {a}^\top {\Sigma} {a})$. We can derive the limiting distributions for various metrics by choosing ${a}$ appropriately: for the individual mean $\theta_i$, we set ${a} = \mathbf{e}_i$ (the standard basis vector), and for the pairwise difference $\Delta_{ij}$, we set ${a} = \mathbf{e}_i - \mathbf{e}_j$.

\begin{corollary}[Univariate CLTs for $\hat{\theta}_i$ and $\hat{\Delta}_{ij}$]
\label{cor:linear_functions}
Under the assumptions of Theorem~\ref{thm:collab_clt},
\begin{equation}
\sqrt{N}(\hat{\theta}_i - \theta_i) \xrightarrow{d} \mathcal{N}(0, \Sigma_{ii})
\quad\text{and}\quad
\sqrt{N}(\hat{\Delta}_{ij} - \Delta_{ij}) \xrightarrow{d} \mathcal{N}(0, \Sigma_{ii} + \Sigma_{jj} - 2\Sigma_{ij}).
\end{equation}
\end{corollary}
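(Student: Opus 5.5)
This corollary follows from Theorem~\ref{thm:collab_clt} by the Cram\'er--Wold device, or equivalently the continuous mapping theorem applied to a fixed linear map. I take as given the joint convergence $\sqrt{N}(\hat{\theta} - \theta) \xrightarrow{d} \mathcal{N}(0, \Sigma)$ in $\mathbb{R}^M$. The plan has two steps: compute the projection of the limit onto a fixed direction, and identify that projection with the quantities in the corollary.

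First, fix a deterministic vector $a \in \mathbb{R}^M$. The map $x \mapsto a^\top x$ is continuous from $\mathbb{R}^M$ to $\mathbb{R}$. By the continuous mapping theorem,
\begin{equation}
a^\top \sqrt{N}(\hat{\theta} - \theta) = \sqrt{N}(a^\top\hat{\theta} - a^\top\theta) \xrightarrow{d} a^\top Z, \qquad Z \sim \mathcal{N}(0,\Sigma).
\end{equation}
A linear functional of a Gaussian vector is Gaussian, so $a^\top Z \sim \mathcal{N}(0, a^\top \Sigma a)$. If $a^\top\Sigma a = 0$, this is read as the point mass at zero.

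Second, specialize $a$. For the mean, take $a = \mathbf{e}_i$. Then $a^\top\hat{\theta} = \hat{\theta}_i$, $a^\top\theta = \theta_i$ and $a^\top\Sigma a = \Sigma_{ii}$. For the difference, take $a = \mathbf{e}_i - \mathbf{e}_j$. By definition $\hat{\Delta}_{ij} = \hat{\theta}_i - \hat{\theta}_j$, and $\Delta_{ij} = \mathbb{E}[S_i - S_j] = \theta_i - \theta_j$ by linearity of expectation. Expanding the quadratic form and using the symmetry $\Sigma_{ij} = \Sigma_{ji}$ (evident from \eqref{eq:limiting_cov}) gives $a^\top \Sigma a = \Sigma_{ii} + \Sigma_{jj} - 2\Sigma_{ij}$.

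The only point needing care is that the convergence in Theorem~\ref{thm:collab_clt} must be joint, as a vector in $\mathbb{R}^M$, rather than merely coordinatewise. Coordinatewise limits would not determine the law of $\hat{\theta}_i - \hat{\theta}_j$. The theorem is stated jointly with the full covariance $\Sigma$, so this holds. When $j$ is an anchor model, its coordinate is still included in $\hat{\theta}$, with $p_j = 1$ under full observation. No further obstacle arises.
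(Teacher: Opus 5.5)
Your proposal is correct and follows the paper's proof. The paper also derives both statements from the joint CLT of Theorem~\ref{thm:collab_clt}, using the fact that a fixed linear map $x \mapsto a^\top x$ preserves convergence to a Gaussian, with $a = \mathbf{e}_i$ and $a = \mathbf{e}_i - \mathbf{e}_j$. You additionally make explicit the need for joint convergence and the symmetry of $\Sigma$, which the paper leaves implicit.
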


These results allow us to construct confidence intervals for the true $\theta_i$ and $\Delta_{ij}$, provided any consistent estimate $\hat{\Sigma}$ of the asymptotic covariance matrix $\Sigma$ in Equation~\eqref{eq:limiting_cov}:

\begin{corollary}[Confidence intervals]
\label{cor:collab_ci}
Let $\hat{\Sigma} \xrightarrow{p} \Sigma$, for $\Sigma$ defined in Equation~\eqref{eq:limiting_cov}. Let $z_{1-\alpha/2}$ denote the $(1-\alpha/2)$-quantile of the standard normal distribution for $\alpha \in (0, 1)$, and let
\begin{equation}
\mathcal{C}_\alpha(\hat{\theta}_i) = \left[ \hat{\theta}_i \pm z_{1-\alpha/2}\sqrt{\frac{\hat{\Sigma}_{ii}}{N}} \right]
\quad\text{and}\quad
\mathcal{C}_\alpha(\hat{\Delta}_{ij}) = \left[ \hat{\Delta}_{ij} \pm z_{1-\alpha/2}\sqrt{\frac{\hat{\Sigma}_{ii} + \hat{\Sigma}_{jj} - 2\hat{\Sigma}_{ij}}{N}} \right].
\end{equation}
Then, under the assumptions of Theorem~\ref{thm:collab_clt}, the intervals $\mathcal{C}_\alpha(\hat{\theta}_i)$ and $\mathcal{C}_\alpha(\hat{\Delta}_{ij})$ satisfy:
\begin{equation}
\liminf_{N \to \infty} \mathbb{P}\big(\theta_i \in \mathcal{C}_\alpha(\hat{\theta}_i)\big) \geq 1 - \alpha \quad \text{and} \quad \liminf_{N \to \infty} \mathbb{P}\big(\Delta_{ij} \in \mathcal{C}_\alpha(\hat{\Delta}_{ij})\big) \geq 1 - \alpha.
\end{equation}
\end{corollary}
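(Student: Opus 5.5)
The plan is to combine Corollary~\ref{cor:linear_functions} with Slutsky's theorem. Fix a target model $i$ (or a pair $(i,j)$), and write $a = \mathbf{e}_i$ or $a = \mathbf{e}_i - \mathbf{e}_j$. The quantity of interest is then $a^\top\theta$, with estimator $a^\top\hat\theta$ and limiting variance $\sigma_a^2 = a^\top \Sigma a$. The interval in question is exactly $\big[a^\top\hat\theta \pm z_{1-\alpha/2}\sqrt{a^\top\hat\Sigma a/N}\big]$. Hence the coverage event is
\[
\Big|\sqrt{N}(a^\top\hat\theta - a^\top\theta)\Big| \le z_{1-\alpha/2}\sqrt{a^\top \hat\Sigma a}.
\]
Everything reduces to controlling the ratio of these two quantities.

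First I treat the nondegenerate case $\sigma_a^2 > 0$. Since $\hat\Sigma \xrightarrow{p} \Sigma$ and $x\mapsto\sqrt{a^\top x a}$ is continuous, the continuous mapping theorem gives $\sqrt{a^\top\hat\Sigma a}\xrightarrow{p}\sigma_a$. Corollary~\ref{cor:linear_functions} gives $\sqrt{N}(a^\top\hat\theta - a^\top\theta)\xrightarrow{d}\mathcal{N}(0,\sigma_a^2)$. Slutsky's theorem then yields
\[
T_N = \frac{\sqrt{N}(a^\top\hat\theta - a^\top\theta)}{\sqrt{a^\top\hat\Sigma a}} \xrightarrow{d} \mathcal{N}(0,1).
\]
On the event $a^\top\hat\Sigma a \le 0$, whose probability tends to zero, I take $T_N$ to be, say, $0$. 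The boundary set $\{\pm z_{1-\alpha/2}\}$ has standard normal measure zero, so the portmanteau theorem gives $\mathbb{P}(|T_N|\le z_{1-\alpha/2})\to 1-\alpha$. The coverage probability differs from this by at most $\mathbb{P}(a^\top\hat\Sigma a\le 0)\to 0$. So the limit exists and equals $1-\alpha$, which gives the $\liminf$ bound.

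The main obstacle is the degenerate case $\sigma_a^2 = 0$. This can happen, for example, for $\Delta_{ij}$ when $S_i$ and $S_j$ coincide almost surely. Here Slutsky's theorem no longer gives a pivot. It is also why the statement asks only for a $\liminf \ge 1-\alpha$.

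The first tool I would try is Chebyshev/portmanteau on the unnormalized quantity. Corollary~\ref{cor:linear_functions} gives $\sqrt{N}(a^\top\hat\theta - a^\top\theta)\xrightarrow{p} 0$, but the interval half-width $z_{1-\alpha/2}\sqrt{a^\top\hat\Sigma a}$ also tends to zero. So coverage is not automatic, and some further structure is needed.

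The honest resolution I would pursue is an additional argument in the degenerate case. I would show that the estimation error is then exactly zero, or at least dominated by the plug-in width, by inspecting the form of $\hat\theta$ in Equation~\eqref{eq:cross_matrix_ppi}. For instance, if $S_i=S_j$ almost surely and the design is paired, then $\hat\Delta_{ij}=0=\Delta_{ij}$ identically, so coverage holds trivially.

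Failing a general argument of that kind, I would fall back to assuming $\Sigma_{ii}>0$ and $\Sigma_{ii}+\Sigma_{jj}-2\Sigma_{ij}>0$. This is the standard nondegeneracy condition implicit in the CLT-based intervals of \citet{zrnic2025cross}. Under it, the first case already proves the corollary, with equality in the limit.
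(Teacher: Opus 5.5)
Your argument is correct and matches the paper's proof, which is the one line ``Corollary~\ref{cor:linear_functions} plus consistency of $\hat\Sigma$ plus Slutsky''; your explicit handling of the event $a^\top\hat\Sigma a\le 0$ is a detail the paper omits. The paper silently assumes $a^\top\Sigma a>0$, so your fallback nondegeneracy assumption is the right fix. It is genuinely needed: the hypothesis allows any consistent $\hat\Sigma$ (e.g.\ $\hat\Sigma\equiv\Sigma$, which gives a zero-width interval when $a^\top\Sigma a=0$), so coverage can fail in the degenerate case and no general ``error dominated by plug-in width'' argument can rescue it.
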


We are now ready to provide the formal statement of Theorem~\ref{thm:informal_coverage}.

\begin{theorem}[CI Coverage of CollabEval: Formal]
\label{thm:empirical_cov}
Suppose Assumption \ref{ass:stable} holds, the evaluation scores $S_{ij}$ are strictly bounded, and the sampling conditions of Theorem \ref{thm:collab_clt} are satisfied. Let $\hat{\Sigma}$ be the empirical covariance estimator constructed using the cross-fold predictions $\hat{Y}$ and true observed evaluation scores $S$. Then, $\hat{\Sigma} \xrightarrow{p} \Sigma$, and consequently,
\begin{equation}
\liminf_{N \to \infty} \mathbb{P}\big(\theta_i \in \mathcal{C}_\alpha(\hat{\theta}_i)\big) \geq 1 - \alpha \quad \text{and} \quad \liminf_{N \to \infty} \mathbb{P}\big(\Delta_{ij} \in \mathcal{C}_\alpha(\hat{\Delta}_{ij})\big) \geq 1 - \alpha.
\end{equation}
\end{theorem}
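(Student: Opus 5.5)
The plan is to reduce everything to Corollary~\ref{cor:collab_ci}. That corollary already delivers the coverage statements once we know $\hat{\Sigma} \xrightarrow{p} \Sigma$, so the work is to show consistency of each entry of $\hat{\Sigma}$ in Equation~\eqref{eq:empirical_covariance} for the corresponding entry of $\Sigma$ in Equation~\eqref{eq:limiting_cov}. Entrywise convergence suffices, since $\Sigma$ is a fixed finite matrix. The continuous mapping theorem then gives convergence of $\hat{\Sigma}_{ii}$ and of $\hat{\Sigma}_{ii}+\hat{\Sigma}_{jj}-2\hat{\Sigma}_{ij}$. I would also handle the degenerate case where a limiting variance is zero: there the interval trivially covers in the limit, or one restricts to the positive-variance case as the corollary implicitly does.

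\textbf{Step 1: the scaling prefactor.} By the sampling assumptions of Theorem~\ref{thm:collab_clt}, $n_i/N \to p_i>0$, $n_j/N\to p_j>0$ and $n_{ij}/N \xrightarrow{p} p_{ij}$. Hence $N n_{ij}/(n_i n_j) = (n_{ij}/N)/((n_i/N)(n_j/N)) \xrightarrow{p} p_{ij}/(p_ip_j)$ by Slutsky. It then suffices to show that each of the four empirical covariances converges in probability to its population counterpart, with $\hat{\bunderline{S}}$ in place of $\hat{Y}$.

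\textbf{Step 2: covariances of true scores.} $\widehat{\mathrm{Cov}}(S_i,S_j)$ is computed over $\mathcal{J}_i\cap\mathcal{J}_j$. Because $\Omega$ is independent of the prompts and the prompts are i.i.d., conditionally on $\Omega$ this is an ordinary sample covariance of $n_{ij}$ i.i.d.\ pairs. Boundedness of the scores and the weak law of large numbers give consistency whenever $n_{ij}\to\infty$. When $p_{ij}=0$, the prefactor tends to zero and the bounded empirical covariance is $O_p(1)$, so the product still tends to $0=\Sigma_{ij}$'s corresponding term; this case is handled separately.

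\textbf{Step 3: covariances involving predictions (the main obstacle).} The difficulty is that the $\hat{Y}_{ij}$ are not i.i.d.: they depend on the whole matrix through the fold-wise completions. My plan is to decompose $\hat{Y}_{ij} = \hat{\bunderline{S}}_{ij} + R_{ij}$, where $\hat{\bunderline{S}}_{ij}=h_i(X_j)$ is i.i.d.\ across $j$. Then:
\begin{itemize}
\item Each empirical covariance splits into the i.i.d.\ term, which is consistent by the law of large numbers using boundedness or finite second moments, plus cross terms of the form $\frac1N\sum_j R_{ij}\hat{\bunderline{S}}_{i'j}$, $\frac1N\sum_j R_{ij}S_{i'j}$ and $\frac1N\sum_j R_{ij}R_{i'j}$, together with mean-correction terms.
\item By Cauchy--Schwarz, all of these are controlled by $\frac1N\sum_j R_{ij}^2$ times $O_p(1)$ factors.
\item So I need $\frac1N\sum_j R_{ij}^2 \xrightarrow{p} 0$, and this is where Assumption~\ref{ass:stable} enters. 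Each $R_{ij}$ is either a single fold residual $\hat S^{(k)}_{ij}-\hat{\bunderline{S}}_{ij}$ or the average of $K$ of them, which is bounded by Jensen by their mean square. By exchangeability of folds, each has the law of the fold-1 residual.
\item Assumption~\ref{ass:stable} controls the conditional variance. I would additionally use the centering implicit in the assumption, namely that the conditional mean of the residual vanishes (the assumption is adapted from \citet{zrnic2025cross}, where the same argument goes through). Then $\mathbb{E}[R_{ij}^2]\to 0$ uniformly in $j$ after taking expectations, using boundedness to pass from $L_1$ convergence of the standard deviation to convergence of second moments.
\item Markov's inequality then yields $\frac1N\sum_j R_{ij}^2\xrightarrow{p}0$.
\end{itemize}
For the mixed term $\widehat{\mathrm{Cov}}(\hat Y_i,S_j)$ computed on observed indices, the same argument applies, restricted to $\mathcal{J}_j$ with normalization $n_j$.

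Combining Steps 1--3 gives $\hat{\Sigma}\xrightarrow{p}\Sigma$, and Corollary~\ref{cor:collab_ci} finishes the proof. The delicate point is Step 3: the assumption is stated as a conditional-variance condition, so I expect to have to argue carefully, or to interpret the assumption, in order to bridge it to an averaged mean-squared residual bound.
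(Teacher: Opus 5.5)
Your proposal is correct and follows essentially the same route as the paper's proof. In both, stability plus a vanishing conditional bias and boundedness give mean-square convergence of the cross-fold predictions to $\hat{\bunderline{S}}$, which makes the empirical covariances consistent; Slutsky handles the prefactor $Nn_{ij}/(n_in_j)$, and Corollary~\ref{cor:collab_ci} finishes. Your Cauchy--Schwarz and Markov control of $\frac{1}{N}\sum_j R_{ij}^2$ is a more careful version of the paper's ``substituting the converged predictions preserves this.'' The centering you flag is exactly the conditional-bias step the paper also asserts, from the determinism of $\hat{\bunderline{S}}$ rather than from Assumption~\ref{ass:stable} itself.

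One small correction to your aside: a zero limiting variance does not make coverage trivial. Both the estimation error and the interval half-width vanish, so their ratio is uncontrolled. You should simply assume positive limiting variances, as the paper implicitly does.
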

\begin{remark}
Note that these intervals guarantee marginal coverage for individual $\theta_i$ and $\Delta_{ij}$. To obtain simultaneous coverage for the full vector of parameters $\theta \in \mathbb{R}^M$, one can construct a confidence ellipsoid based on the $\chi^2$ distribution, or apply a Bonferroni correction by using marginal intervals with level $1 - \alpha/M$. See a similar discussion for multivariate PPI++ in \citet{angelopoulos2023ppi++}, and for deriving intervals for full model rankings from pairwise comparisons in \citet{chatzi2024ppirank}.\looseness=-1
\end{remark}


\subsection{Power tuning}
\label{sec:powertuning}
\citet{angelopoulos2023ppi++} proposed "power tuning" as a way to improve upon the standard PPI estimator by adapting to the quality of the predictions $\hat{Y}$ through the use of a learned scalar $\lambda\in\mathbb{R}$ that can reduce variance when setting $\hat{Y}_i^{\prime}=\lambda_i\hat{Y}_i$. We extend this to our setting to learn a finite set of weights $\hat{\lambda}\in\mathbb{R}^{M}$ for the predictions $\hat{Y}_{i}$ of each row $i$. By Slusky's theorem, this  weighting preserves asymptotic normality as long as $\hat{\lambda} \xrightarrow[]{p} \lambda$, and is picked to minimize variance. The power-tuned estimate for $\theta_{i}$ is:\looseness=-1
$$\hat{\theta}_{i}^{\lambda_i}=\frac{1}{|\mathcal{J}_i|} \sum_{j \in \mathcal{J}_i} S_{ij} - \lambda_i\left(\frac{1}{|\mathcal{J}_i|} \sum_{j \in \mathcal{J}_i} \hat{Y}_{ij} - \frac{1}{N}\sum_{j=1}^N \hat{Y}_{ij}\right),$$ while the power-tuned estimate for the difference is $\hat{\Delta}_{ij}^{\lambda_{ij}}=\hat{\theta}_{i}^{\lambda_i}-\hat{\theta}_{j}^{\lambda_j}$. Optimal values for $\lambda$  differ depending on if the goal is to estimate $\theta_{i}$ and $\theta_{j}$ individually, or their difference $\Delta_{ij}$.\looseness=-1

\begin{proposition}[Optimal weights for $\hat{\theta}_i^\lambda$ and $\hat{\Delta}_{ij}^\lambda$]
\label{cor:optimal_lambda}
Assume ${\mathrm{Var}(\hat{\bunderline{S}}_i)} > 0$. Then $\lambda^*_i = \frac{\mathrm{Cov}(S_i, \hat{\bunderline{S}}_i)}{\mathrm{Var}(\hat{\bunderline{S}}_i)}$ minimizes the asymptotic variance of  $\hat{\theta}_i^{\lambda_i}$.
Further assume ${\mathrm{Var}(\hat{\bunderline{S}}_j)} > 0$. Let $\gamma_{ij} = \frac{p_{ij}}{p_ip_j} - 1$. Then
\begin{equation}
\label{eq:lambda_delta}
\begin{bmatrix}
\lambda^*_i \\
\lambda^*_j
\end{bmatrix}
=
\underbrace{\begin{bmatrix}
\gamma_{ii} \mathrm{Var}(\hat{\bunderline{S}}_i) & -\gamma_{ij} \mathrm{Cov}(\hat{\bunderline{S}}_i, \hat{\bunderline{S}}_j) \\
-\gamma_{ij} \mathrm{Cov}(\hat{\bunderline{S}}_i, \hat{\bunderline{S}}_j) & \gamma_{jj} \mathrm{Var}(\hat{\bunderline{S}}_j)
\end{bmatrix}^{-1}}_{{Q}}
\underbrace{\begin{bmatrix}
\gamma_{ii} \mathrm{Cov}(S_i, \hat{\bunderline{S}}_i) - \gamma_{ij} \mathrm{Cov}(\hat{\bunderline{S}}_i, S_j) \\
\gamma_{jj} \mathrm{Cov}(S_j, \hat{\bunderline{S}}_j) - \gamma_{ij} \mathrm{Cov}(S_i, \hat{\bunderline{S}}_j)
\end{bmatrix}}_{{u}}
\end{equation}
minimizes the asymptotic variance of $\hat{\Delta}_{ij}^{\lambda_{ij}}$.
\end{proposition}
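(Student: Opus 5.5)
The plan is to reduce both claims to minimizing an explicit quadratic function of $\lambda$. That quadratic comes from Theorem~\ref{thm:collab_clt} applied to rescaled predictions.

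\textbf{Step 1: the limiting covariance for fixed $\lambda$.} Fix $\lambda\in\mathbb{R}^M$. The power-tuned estimator $\hat\theta_i^{\lambda_i}$ is exactly the estimator of Equation~\eqref{eq:cross_matrix_ppi} with $\hat Y_{ij}$ replaced by $\lambda_i\hat Y_{ij}$. These rescaled predictions come from the rescaled completion algorithm $\lambda_i\hat S^{(k)}_{ij}$.

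Assumption~\ref{ass:stable} transfers to the rescaled algorithm with population limit $\lambda_i\hat{\bunderline{S}}_{ij}$, because the conditional standard deviation scales by $|\lambda_i|$. Theorem~\ref{thm:collab_clt} then applies verbatim. Since $n_{ii}=n_i$, we have $p_{ii}=p_i$ and $\gamma_{ii}=1/p_i-1$, and the limiting covariance is
\begin{equation}
\Sigma^\lambda_{ij} = \tfrac{p_{ij}}{p_ip_j}\mathrm{Cov}(S_i,S_j) + \gamma_{ij}\big[\lambda_i\lambda_j\mathrm{Cov}(\hat{\bunderline{S}}_i,\hat{\bunderline{S}}_j) - \lambda_i\mathrm{Cov}(\hat{\bunderline{S}}_i,S_j) - \lambda_j\mathrm{Cov}(S_i,\hat{\bunderline{S}}_j)\big].
\end{equation}
A data-driven $\hat\lambda\xrightarrow{p}\lambda^*$ then yields the same limit by Slutsky's theorem, as noted in the paper. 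So it suffices to minimize the asymptotic variance over deterministic $\lambda$.

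\textbf{Step 2: the individual mean.} The asymptotic variance is
\[
\Sigma^\lambda_{ii}=\tfrac{1}{p_i}\mathrm{Var}(S_i)+\gamma_{ii}\big[\lambda_i^2\mathrm{Var}(\hat{\bunderline{S}}_i)-2\lambda_i\mathrm{Cov}(S_i,\hat{\bunderline{S}}_i)\big].
\]
If $p_i<1$, then $\gamma_{ii}>0$, and since $\mathrm{Var}(\hat{\bunderline{S}}_i)>0$ this is a strictly convex parabola in $\lambda_i$. Setting its derivative to zero gives $\lambda_i^*=\mathrm{Cov}(S_i,\hat{\bunderline{S}}_i)/\mathrm{Var}(\hat{\bunderline{S}}_i)$.

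If $p_i=1$, then $\gamma_{ii}=0$, the variance does not depend on $\lambda_i$, and the stated $\lambda_i^*$ is trivially a minimizer.

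\textbf{Step 3: the difference.} The asymptotic variance of $\hat\Delta_{ij}^{\lambda_{ij}}$ is $V(\lambda_i,\lambda_j)=\Sigma^\lambda_{ii}+\Sigma^\lambda_{jj}-2\Sigma^\lambda_{ij}$. Expanding the display above, this equals a $\lambda$-free constant plus $\lambda^\top Q\lambda-2u^\top\lambda$, where $\lambda=(\lambda_i,\lambda_j)^\top$ and $Q,u$ are exactly the matrix and vector in Equation~\eqref{eq:lambda_delta}.

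The cross term is where the bookkeeping matters. The term $-2\Sigma^\lambda_{ij}$ contributes $-2\gamma_{ij}\lambda_i\lambda_j\mathrm{Cov}(\hat{\bunderline{S}}_i,\hat{\bunderline{S}}_j)$, which supplies the off-diagonal entries of $Q$. It also contributes the linear terms $+2\gamma_{ij}\lambda_i\mathrm{Cov}(\hat{\bunderline{S}}_i,S_j)$ and $+2\gamma_{ij}\lambda_j\mathrm{Cov}(S_i,\hat{\bunderline{S}}_j)$, which supply the $-\gamma_{ij}$ terms in $u$.

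The gradient is $2(Q\lambda-u)$, so the stationary point is $\lambda^*=Q^{-1}u$. This is the global minimizer provided $Q$ is positive definite.

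\textbf{Main obstacle: showing $Q\succ 0$.} I expect this to be the hardest step. For every $\lambda$, the form $\lambda^\top Q\lambda$ is the $\lambda$-dependent quadratic part of an asymptotic variance. Concretely, it is the limiting variance of
\[
\lambda_i\Big(\tfrac{1}{n_i}\textstyle\sum_{\mathcal{J}_i}\hat{\bunderline{S}}_{i\cdot}-\tfrac1N\sum\hat{\bunderline{S}}_{i\cdot}\Big)-\lambda_j\Big(\tfrac{1}{n_j}\textstyle\sum_{\mathcal{J}_j}\hat{\bunderline{S}}_{j\cdot}-\tfrac1N\sum\hat{\bunderline{S}}_{j\cdot}\Big).
\]
This follows by running Theorem~\ref{thm:collab_clt} with $S\equiv 0$, which is the same computation as in Step 1. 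A variance is nonnegative, so $Q\succeq0$, and therefore every stationary point is a global minimizer.

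The statement writes $Q^{-1}$, so I will treat invertibility as implicit and note it explicitly. It holds, for example, when $0<p_i,p_j<1$ and $\hat{\bunderline{S}}_i,\hat{\bunderline{S}}_j$ are not perfectly collinear. When $Q$ is only singular, any solution of $Q\lambda=u$ is a minimizer.
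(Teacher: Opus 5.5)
Your proof is correct and follows the paper's route: expand $\Sigma^\lambda_{ii}+\Sigma^\lambda_{jj}-2\Sigma^\lambda_{ij}$ into a $\lambda$-free constant plus $\lambda^\top Q\lambda-2u^\top\lambda$, then solve $Q\lambda=u$. You also correctly use $\gamma_{jj}$ in the $\lambda_j^2$ term, where the paper's expansion has the typo $\gamma_{ij}$.

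Your additions are genuinely missing from the paper: the transfer of Assumption~\ref{ass:stable} to rescaled predictions, the $p_i=1$ case, and $Q\succeq 0$, which makes the stationary point a global minimizer. For $Q\succeq 0$, a cleaner check than invoking the theorem with $S\equiv 0$ (which is awkward, since $\hat{\bunderline{S}}$ is a function of anchor scores) is the following. Note that $\gamma_{ij}^2\le\gamma_{ii}\gamma_{jj}$ is equivalent to $(p_{ij}-p_ip_j)^2\le p_i(1-p_i)p_j(1-p_j)$, which is Cauchy--Schwarz for the membership indicators of $\mathcal{J}_i$ and $\mathcal{J}_j$. Combine this with $\mathrm{Cov}(\hat{\bunderline{S}}_i,\hat{\bunderline{S}}_j)^2\le\mathrm{Var}(\hat{\bunderline{S}}_i)\mathrm{Var}(\hat{\bunderline{S}}_j)$ to get $Q_{ii}Q_{jj}\ge Q_{ij}^2$.
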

\begin{remark}
Note that the joint optimal weights $(\lambda^*_i, \lambda^*_j)^\top$ defined in Equation (\ref{eq:lambda_delta}) are equivalent to the individual optimal weights $\lambda^*_i = \mathrm{Cov}(S_i, \hat{\bunderline{S}}_i) / \mathrm{Var}(\hat{\bunderline{S}}_i)$ and $\lambda^*_j = \mathrm{Cov}(S_j, \hat{\bunderline{S}}_j) / \mathrm{Var}(\hat{\bunderline{S}}_j)$ when the cross-model terms are zero. This occurs under either of the following conditions:
\begin{itemize}[leftmargin=*, itemsep=1pt]
    \item The observation sets are independent such that $\gamma_{ij} = 0$. As $N \to \infty$, this implies the joint observation fraction $p_{ij}$ equals the product of individual fractions $p_i p_j$.
    \item The predicted scores of one model provide no statistical information regarding the true scores or predictions of the other model, specifically $\mathrm{Cov}(\hat{\bunderline{S}}_i, \hat{\bunderline{S}}_j) = 0$ and $\mathrm{Cov}(\hat{\bunderline{S}}_i, S_j) = 0$.
\end{itemize}
In these cases, the matrix $Q$ becomes diagonal and the vector $u$ simplifies, allowing the joint system to decouple into separate univariate optimizations. However, in practical settings where models are highly correlated or paired sampling is used ($\mathcal{J}_i = \mathcal{J}_j$), the joint weights typically diverge from individual ones to better minimize the variance of the difference estimate.
\end{remark}

Power-tuning ensures that the asymptotic variance of either of our estimates $\hat{\theta}_i^{\lambda_i^*}$ and $\hat{\Delta}_{ij}^{\lambda_{ij}^*}$ is at most that of the classical empirical estimator. To see this, let $\hat{\theta}_i^\mathrm{classic} = \frac{1}{|\mathcal{J}_i|} \sum_{j \in \mathcal{J}_i} S_{ij}$ be the  empirical mean estimate, and let $\hat{\Delta}_{ij}^\mathrm{classic} = \hat{\theta}_i^\mathrm{classic} - \hat{\theta}_j^\mathrm{classic}$ be the empirical mean difference estimate. The following result then gives the asymptotic variance of the power-tuned estimate in terms of the variance of the corresponding classical estimate, less a non-negative term:\looseness=-1

\begin{proposition}
\label{prop:power_tuning_distribution}
Let $\hat{\lambda} \overset{p}{\rightarrow} \lambda^*$, where $\lambda^*$ is defined accordingly for $\hat{\theta}_i^{\lambda_i^*}$ and $\hat{\Delta}_{ij}^{\lambda_{ij}^*}$. Then
\begin{align}
\sqrt{N}(\hat{\theta}_i^{\hat{\lambda}_i} -\theta_i) &\overset{d}{\rightarrow} \mathcal{N}\left(0, \mathrm{Var}(\hat{\theta}_i^\mathrm{classic}) - \left(\frac{1}{p_i} - 1\right)\frac{\mathrm{Cov}(S_i, \hat{\bunderline{S}}_i)^2}{\mathrm{Var}(\hat{\bunderline{S}}_i)}\right) \quad \text{and} \\
\sqrt{N}(\hat{\Delta}_{ij}^{\hat{\lambda}_{ij}} - \Delta_{ij}) &\overset{d}{\rightarrow} \mathcal{N}(0, \mathrm{Var}(\hat{\Delta}_{ij}^\mathrm{classic}) - u^\top Q^{-1}u),
\end{align}
for $u$ and $Q$ defined in Equation~\eqref{eq:lambda_delta},  assuming ${\mathrm{Var}(\hat{\bunderline{S}}_i)} > 0$ and $\mathrm{Var}(\hat{\bunderline{S}}_j) > 0$.
\end{proposition}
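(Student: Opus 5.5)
We derive the limit of the power-tuned estimator from Theorem~\ref{thm:collab_clt} and then substitute the optimal weights from Proposition~\ref{cor:optimal_lambda}.

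\textbf{Reduction to fixed weights.} For a fixed vector $\lambda$, the power-tuned estimator is the CollabEval estimator of Equation~\eqref{eq:cross_matrix_ppi} applied to the rescaled predictions $\lambda_i \hat{Y}_{ij}$. If $\mathcal{M}$ satisfies Assumption~\ref{ass:stable} with limit $\hat{\bunderline{S}}_{ij}$, then the rescaled predictions satisfy it with limit $\lambda_i\hat{\bunderline{S}}_{ij}$, because the variance scales by $\lambda_i^2$. Theorem~\ref{thm:collab_clt} therefore gives
\[
\sqrt{N}(\hat{\theta}^{\lambda}-\theta)\xrightarrow{d}\mathcal{N}(0,\Sigma^\lambda),
\]
where $\Sigma^\lambda$ is Equation~\eqref{eq:limiting_cov} with each $\hat{\bunderline{S}}_i$ replaced by $\lambda_i\hat{\bunderline{S}}_i$.

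To pass to the data-dependent weights $\hat\lambda$, write
\[
\hat{\theta}_i^{\hat\lambda_i}-\hat{\theta}_i^{\lambda_i^*}=-(\hat\lambda_i-\lambda_i^*)\,R_i,
\qquad
R_i=\frac{1}{|\mathcal{J}_i|}\sum_{j\in\mathcal{J}_i}\hat Y_{ij}-\frac1N\sum_{j=1}^N\hat Y_{ij}.
\]
Theorem~\ref{thm:collab_clt} with $\lambda=1$ shows that $\sqrt N R_i$ is the centred prediction part of the estimator and is $O_p(1)$. Since $\hat\lambda_i-\lambda_i^*=o_p(1)$, the difference is $o_p(N^{-1/2})$, and Slutsky's theorem yields the same limit as for fixed $\lambda^*$. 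This step is the main obstacle: tightness of $\sqrt N R_i$ must be extracted cleanly from the proof of Theorem~\ref{thm:collab_clt}. One route is to apply that theorem to the two estimators with $\lambda=0$ and $\lambda=1$ and subtract.

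\textbf{Variance algebra.} Set $\gamma_{ij}=p_{ij}/(p_ip_j)-1$, so that $\gamma_{ii}=1/p_i-1$. Interpret $\mathrm{Var}(\hat\theta_i^{\mathrm{classic}})$ and $\mathrm{Var}(\hat{\Delta}_{ij}^{\mathrm{classic}})$ as the asymptotic variances at $\lambda=0$. These are $\Sigma^0_{ii}=\mathrm{Var}(S_i)/p_i$ and $\Sigma^0_{ii}+\Sigma^0_{jj}-2\Sigma^0_{ij}$, respectively.

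For the mean, $\Sigma^\lambda_{ii}$ is a quadratic in $\lambda_i$:
\[
\Sigma^\lambda_{ii}=\Sigma^0_{ii}+\gamma_{ii}\bigl[\lambda_i^2\mathrm{Var}(\hat{\bunderline{S}}_i)-2\lambda_i\mathrm{Cov}(S_i,\hat{\bunderline{S}}_i)\bigr].
\]
Plugging in $\lambda_i^*=\mathrm{Cov}(S_i,\hat{\bunderline{S}}_i)/\mathrm{Var}(\hat{\bunderline{S}}_i)$ gives the claimed reduction $(1/p_i-1)\mathrm{Cov}(S_i,\hat{\bunderline{S}}_i)^2/\mathrm{Var}(\hat{\bunderline{S}}_i)$.

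For the difference, expand $\Sigma^\lambda_{ii}+\Sigma^\lambda_{jj}-2\Sigma^\lambda_{ij}$ in $(\lambda_i,\lambda_j)$. The result is
\[
\Sigma^0_{ii}+\Sigma^0_{jj}-2\Sigma^0_{ij}+\lambda^\top Q\lambda-2u^\top\lambda,
\]
possibly up to a factor of $2$ that must be tracked consistently with the definitions of $Q$ and $u$ in Equation~\eqref{eq:lambda_delta}. The cross term $-2\gamma_{ij}\lambda_i\lambda_j\mathrm{Cov}(\hat{\bunderline{S}}_i,\hat{\bunderline{S}}_j)$ produces the off-diagonal entries of $Q$. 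The terms $+2\gamma_{ij}\lambda_i\mathrm{Cov}(\hat{\bunderline{S}}_i,S_j)$ and $+2\gamma_{ij}\lambda_j\mathrm{Cov}(S_i,\hat{\bunderline{S}}_j)$ produce the $-\gamma_{ij}$ entries of $u$.

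At the minimiser $\lambda^*=Q^{-1}u$ from Proposition~\ref{cor:optimal_lambda}, the quadratic equals $-u^\top Q^{-1}u$. This gives the stated variance.

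The reduction term is nonnegative when $Q$ is positive semidefinite. That holds because $\lambda^\top Q\lambda$ is, up to a constant, the asymptotic variance of the purely prediction-based part of the difference estimator. In the case where $Q$ is only positive semidefinite, one can interpret the inverse as a generalised inverse.
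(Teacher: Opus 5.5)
Your proposal is correct and follows the paper's proof. Like the paper, you use Slutsky's theorem to replace $\hat\lambda$ by $\lambda^*$, substitute the rescaled predictions into Equation~\eqref{eq:limiting_cov} to obtain $\mathrm{Var}(\hat{\Delta}_{ij}^{\mathrm{classic}})+\lambda^\top Q\lambda-2\lambda^\top u$ (and the analogous quadratic for $\theta_i$), and evaluate at $\lambda^*=Q^{-1}u$; your tightness argument via $\lambda=0,1$ and your justification that $Q$ is PSD fill in details the paper leaves implicit. The expansion matches the paper's $Q$ and $u$ exactly, so the factor of $2$ you hedge about does not arise.
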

\begin{remark} Note that $Q^{-1}$ is PSD, which implies $u^\top Q^{-1} u \geq 0$.
\end{remark}
We use the empirical covariance matrix of $(S, \hat{S})$ to estimate $\lambda^*$.

\section{Dataset details}
\label{sec:datasets}
\textbf{AlpacaEval 2.0.} AlpacaEval~\cite{alpaca_eval} evaluates the instruction-following capabilities of AI models. Models generate responses to user instructions, which an AI autorater (GPT-4 Turbo) then scores against a baseline (also based on GPT-4 Turbo) to determine a win-rate. We select the latest 17 submissions to the AlpacaEval Github repository as our target models, and use the  remaining 204 submissions that were submitted before them as anchor models.

\textbf{MMLU.} The Massive Multitask Language Understanding (MMLU) benchmark~\cite{hendrycks2020measuring} scores over 14k multiple-choice questions for binary accuracy. We evaluate six target LLMs from the Open LLM Leaderboard --- Amber-6.7B~\citep{liu2023llm360}, OLMo1/2-7B~\citep{groeneveld2024olmo, olmo20242}, Pythia-2.8B/6.9B~\citep{biderman2023pythia}, K2-65B~\citep{liu2025llm360} --- alongside 102 anchor LLMs, following \citet{hofmann2025fluid}.\looseness=-1

\textbf{Attributed Question Answering (AQA).} AQA~\cite{bohnet2023attributed} assesses whether a generated answer is both correct and supported by provided evidence, yielding an NLI-based attribution score. We evaluate 21 systems from \citet{bohnet2023attributed} using a leave-two-out protocol: two models serve as targets while the remaining 19 act as anchors. This enables both target-vs-target and target-vs-anchor model comparisons.

\textbf{WMT24$++$ Machine Translation Benchmark.} WMT24$++$ evaluates translations of 998 English paragraphs into 55 languages~\cite{deutsch-etal-2025-wmt24}. Full evaluation of a new model requires both consuming and generating nearly two million tokens ($55$ languages $\times$ $998$ paragraphs $\times \sim 35$ tokens per paragraph). We use the initial release of 15 translation systems provided  by \citet{deutsch-etal-2025-wmt24}, and flatten all language-model pairs into an $825 \times 998$ evaluation matrix that uses MetricX~\cite{juraska-etal-2024-metricx} as the evaluation score. Applying a leave-two-out protocol similar to the AQA setting  within each language yields $2 \times 55$ target rows. However, we restrict all comparisons and paired sampling to be strictly within-language (e.g., we compare model $A$ vs. model $B$ on en-de translations separately from en-fr).\looseness=-1
%

\textbf{SWE-bench.} SWE-bench~\cite{jimenez2024swebench} evaluates the ability of language models to resolve real-world software engineering issues from GitHub. Given a codebase and an issue description, models are tasked with generating a patch that fixes the bug or implements the requested feature. We evaluate the success rate of 12 systems for which item-level scores are publicly available from the SWE-bench leaderboard using the mini-SWE-agent V2 harness, and use the same leave-one-out and leave-two-out protocol as our AQA experiment.

\section{Additional experimental results}
\label{app:additional_results}

\subsection{Effective rank analysis}

Figure~\ref{fig:singular_values} plots the top 32 normalized singular values $(\sigma_i / \sigma_1)$  across all five datasets. The rapid decay of the singular values and the rapid increase of the cumulative explained variance support the hypothesis that the score matrices have low effective rank (which CollabEval can exploit).

\label{sec:singular_values}
\begin{figure}[!h]
    \centering
    \includegraphics[width=1\linewidth]{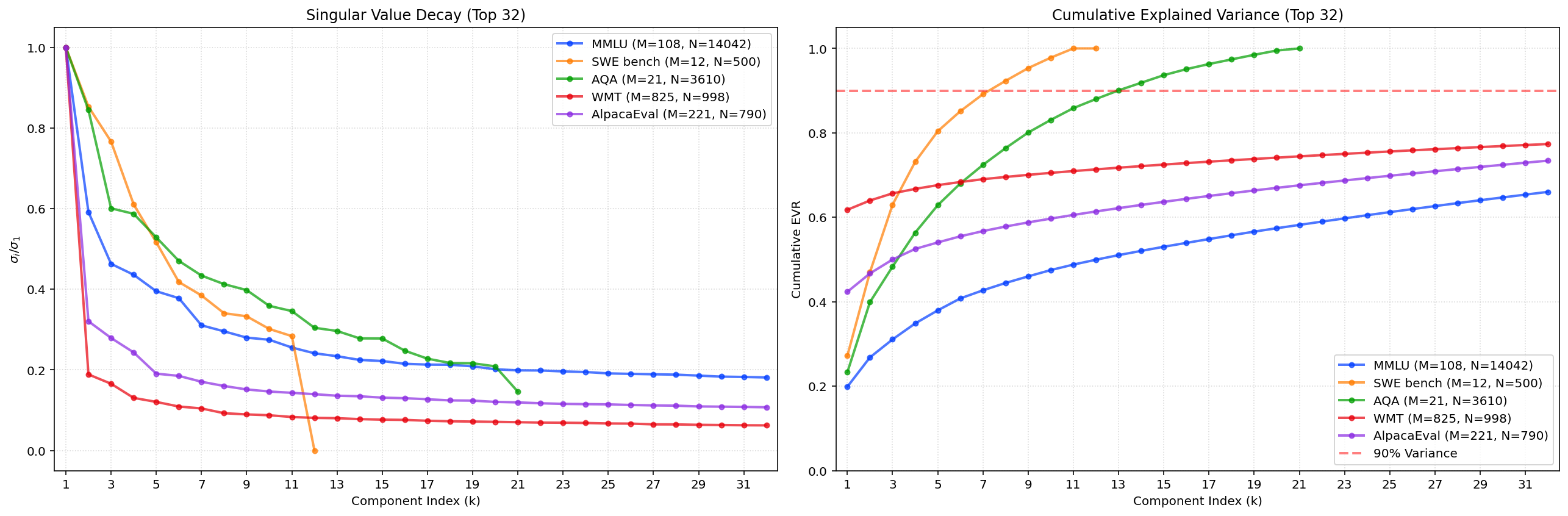}
    \vspace{-12pt}
    \caption{Singular value decay of the  evaluation data matrices for the tasks we consider.}
    \label{fig:singular_values}
   \vspace{-10pt}
\end{figure}

\subsection{Additional baselines}
\label{sec:ablations}

Figure~\ref{fig:additional_baselines} compares the mean squared error of CollabEval against naive imputation, Anchor Points, and standard prediction-powered inference. The Naive Estimate uses the same matrix completion algorithm (IterativeSVD) as CollabEval, but does not correct for bias using the cross-fold procedure. For these results, CollabEval, the naive estimate, and PPI all use paired sampling (Anchor Points also uses paired samples by design). CollabEval consistently yields the lowest error across varying labeled data fractions. Furthermore, while Anchor Points can perform quite well at higher labeled fractions, it also exhibits high variance across datasets---whereas CollabEval is empirically quite robust.

\begin{figure}[!h]
    \centering
    \includegraphics[width=1\linewidth]{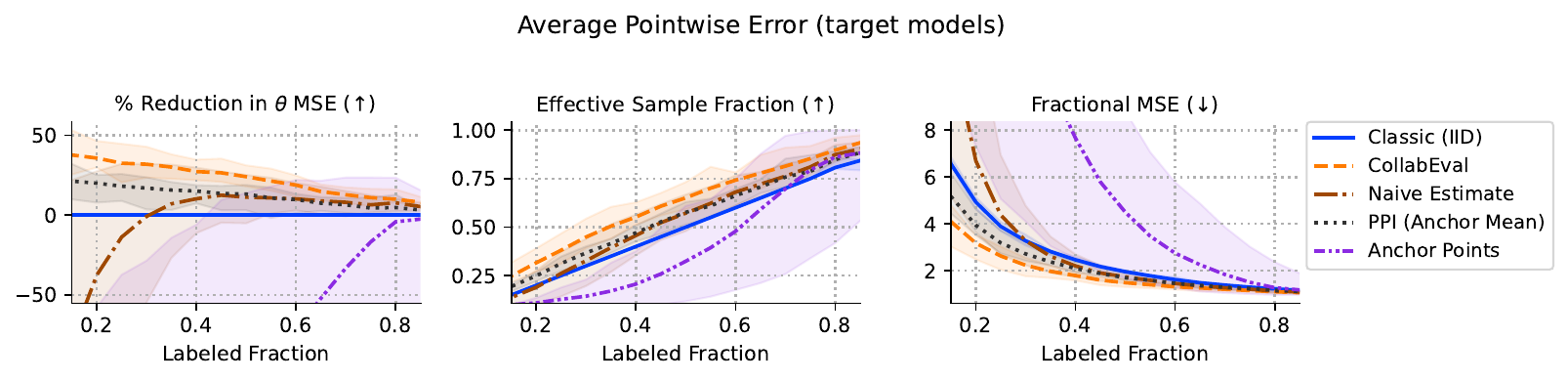}
    \includegraphics[width=1\linewidth]{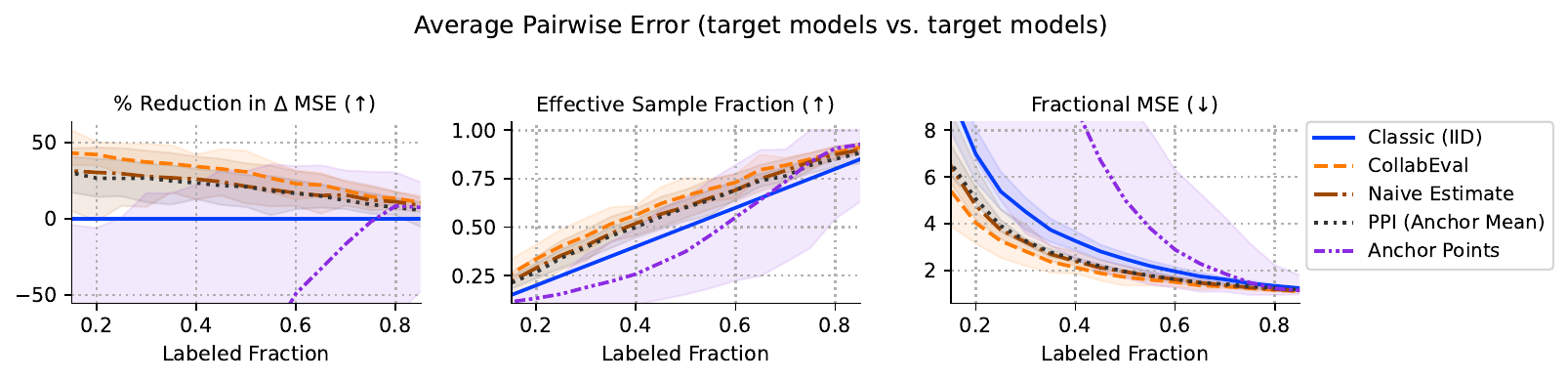}
    \includegraphics[width=1\linewidth]{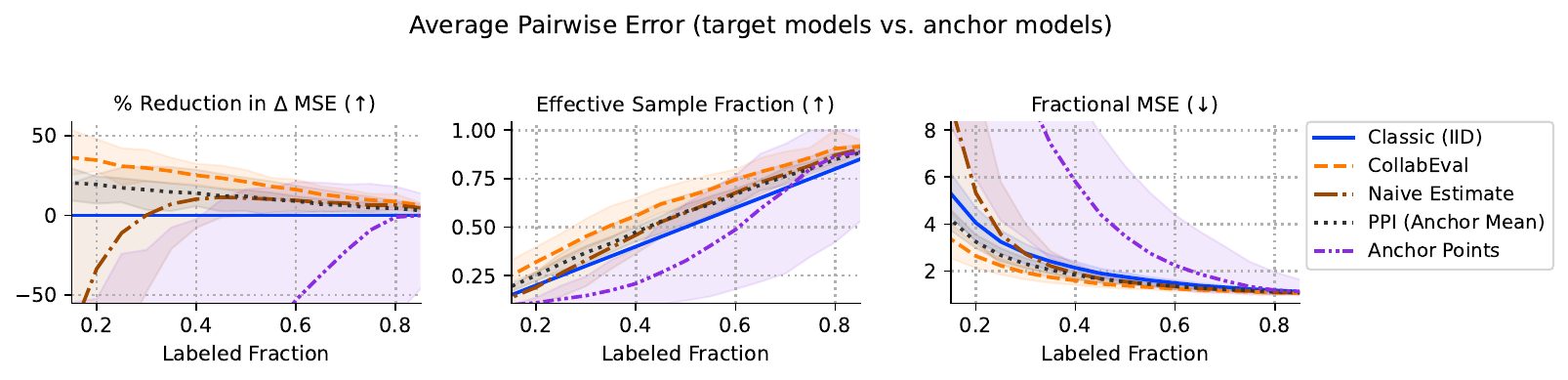}
    \vspace{-12pt}
    \caption{Results when comparing CollabEval to naive imputation, Anchor Points, and standard PPI with per-prompt anchor means. Shaded bands show the worst and best performance across datasets.\looseness=-1}
   \vspace{-10pt}
   \label{fig:additional_baselines}
\end{figure}

\subsection{Per-task results}
 \label{sec:per_task}
 While the average performance in Figure~\ref{fig:averaged_results} demonstrates the overall efficacy of CollabEval, examining the results on a per-dataset basis reveals specific strengths and other interesting dynamics depending on the evaluation context. For instance, on AQA (Figure~\ref{fig:aqa}) we see  particularly strong performance for individual mean estimation, with over a 30\% reduction in confidence interval (CI) width when $p$ is small, and up to a 20\% reduction even when half the data was labeled ($p=0.5$). Similarly, the WMT24++ Machine Translation evaluation (Figure~\ref{fig:wmt}) shows large efficiency gains for pairwise difference estimations, and yields CI width reduction of up to over 30\% in target-vs-target settings at low sampling rates, and also maintains up to a 20\% reduction at moderate sampling rates.
 
 Across these benchmarks, the empirical results further highlight how the choice of sampling strategy interacts with the estimand; while independent (IID) and paired sampling performed virtually identically for estimating individual means, paired sampling is advantageous for estimating differences between target models. In these comparative target-vs-target settings, CollabEval combined with paired sampling consistently yielded the tightest confidence intervals and lowest overall error, noticeably improving upon the already competitive classical paired estimator.

\begin{figure}[!h]
    \centering
    \includegraphics[width=1\linewidth]{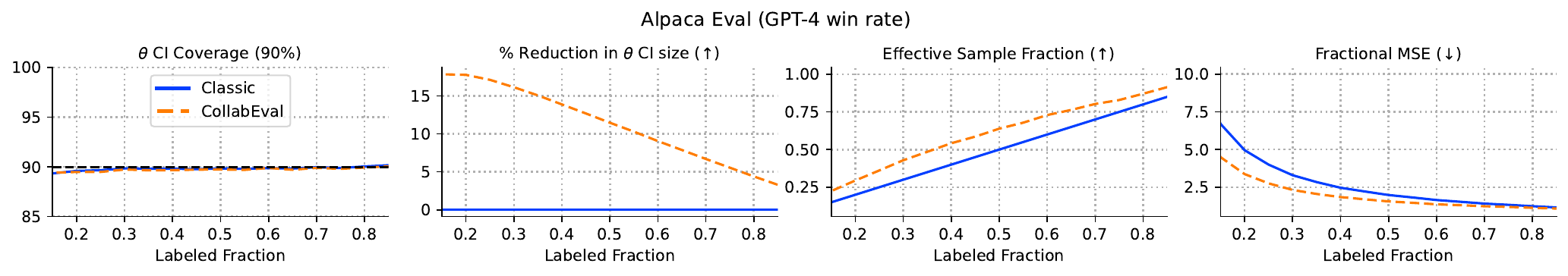}
    \includegraphics[width=1\linewidth]{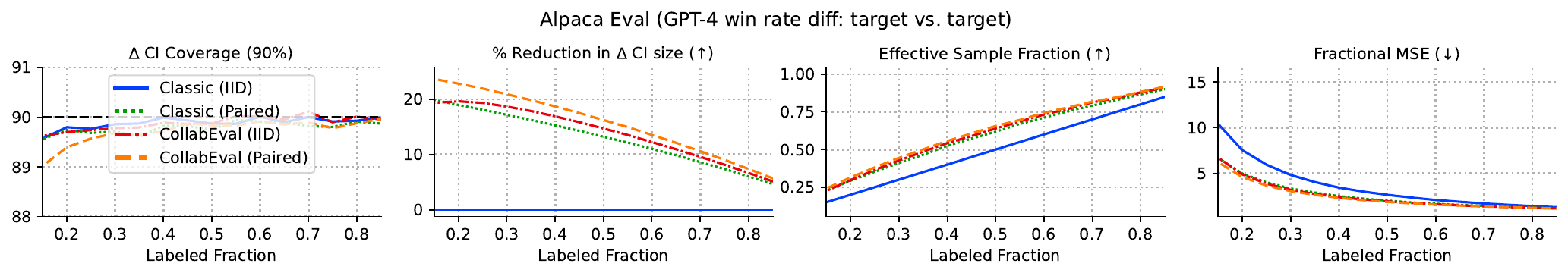}
    \includegraphics[width=1\linewidth]{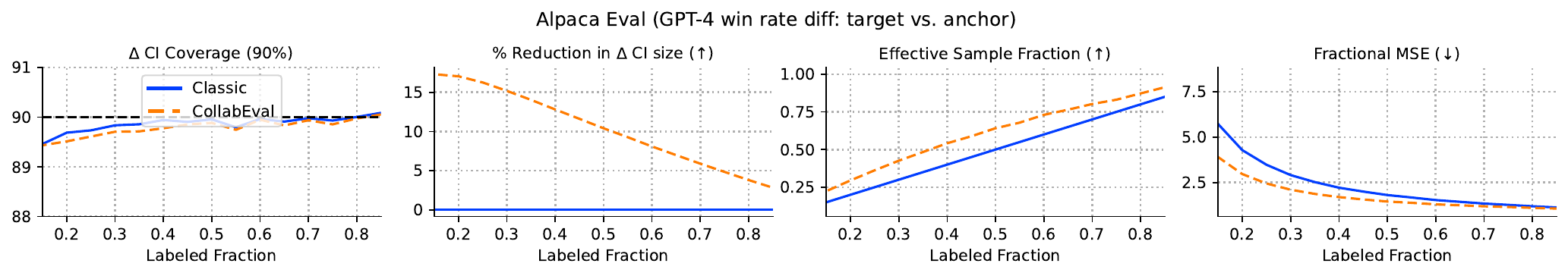}
    \vspace{-15pt}
    \caption{Results for AlpacaEval.}
    \label{fig:alpaca_eval}
\end{figure}

\begin{figure}[!h]
    \centering
    \includegraphics[width=1\linewidth]{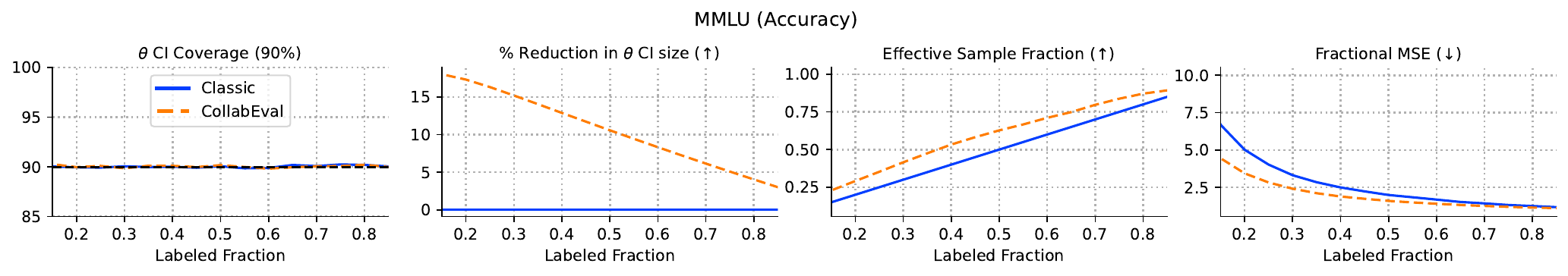}
    \includegraphics[width=1\linewidth]{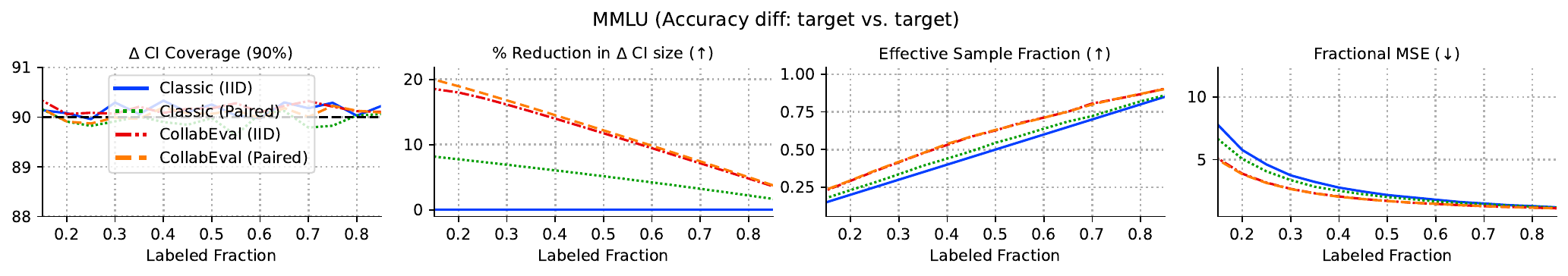}
    \includegraphics[width=1\linewidth]{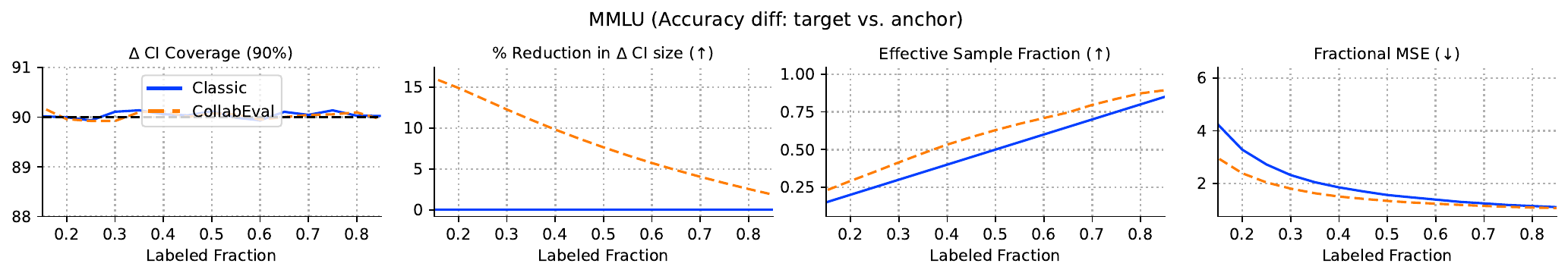}
    \vspace{-15pt}
    \caption{Results for MMLU.}
    \label{fig:mmlu}
\end{figure}

\begin{figure}[!h]
    \centering
    \includegraphics[width=1\linewidth]{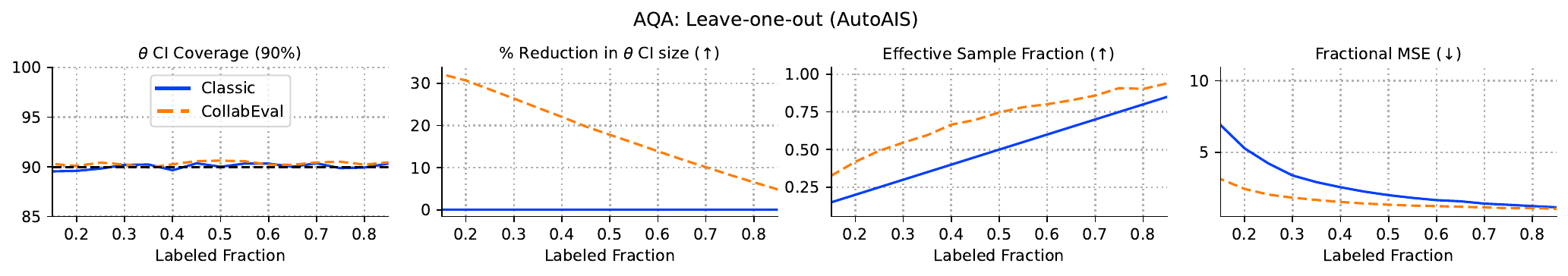}
    \includegraphics[width=1\linewidth]{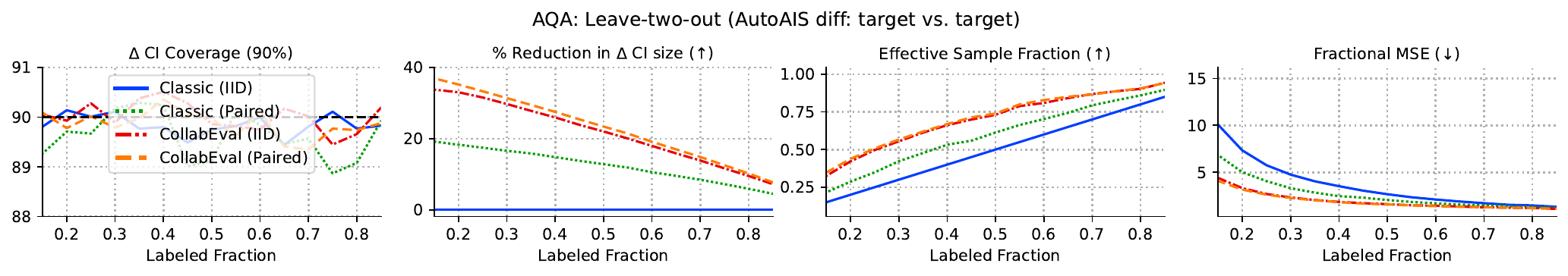}
    \includegraphics[width=1\linewidth]{figures/AQA_Leave-two-out_delta_tgt_tgt.pdf}
    \vspace{-15pt}
    \caption{Results for AQA.}
    \label{fig:aqa}
\end{figure}

\begin{figure}[!h]
    \centering
    \includegraphics[width=1\linewidth]{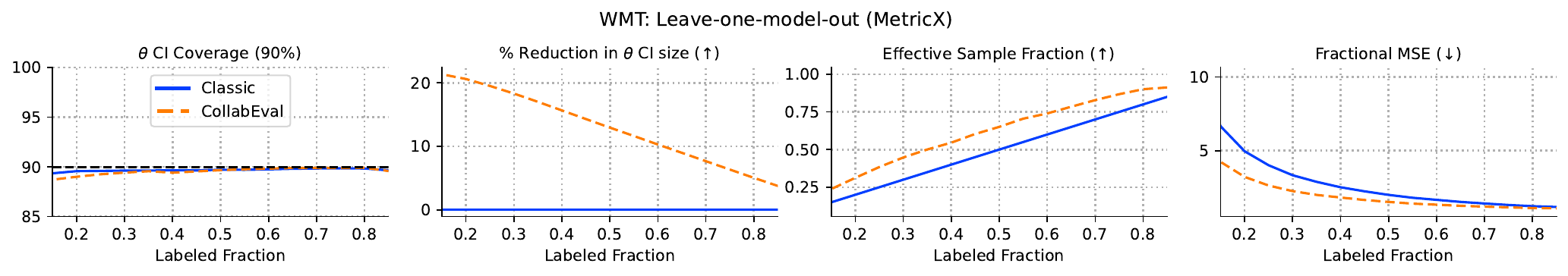}
    \includegraphics[width=1\linewidth]{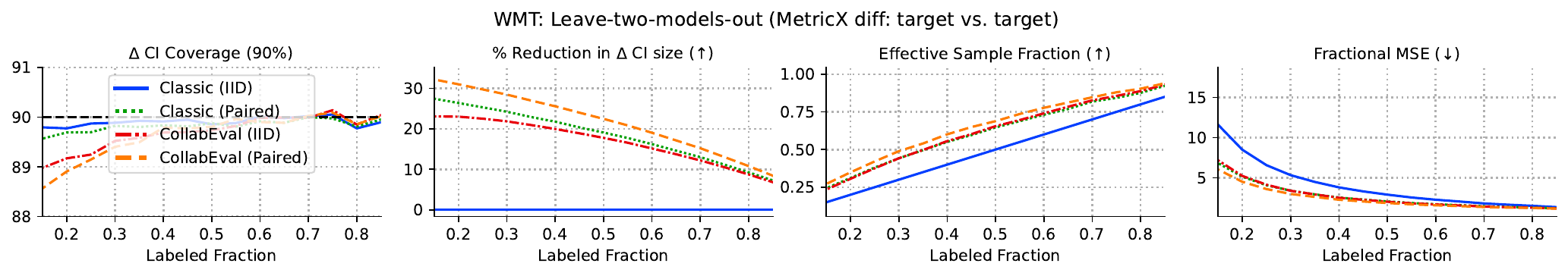}
    \includegraphics[width=1\linewidth]{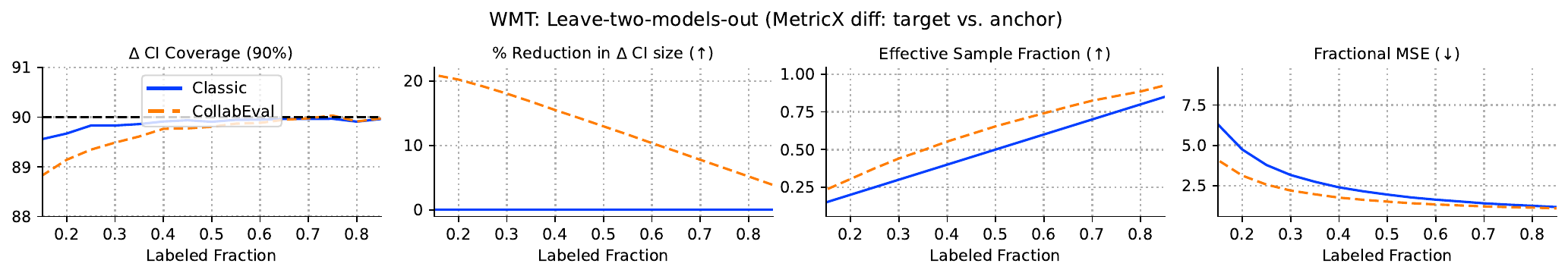}
    \vspace{-15pt}
    \caption{Results for WMT24++.}
    \label{fig:wmt}
\end{figure}

\begin{figure}[!t]
    \centering
    \includegraphics[width=1\linewidth]{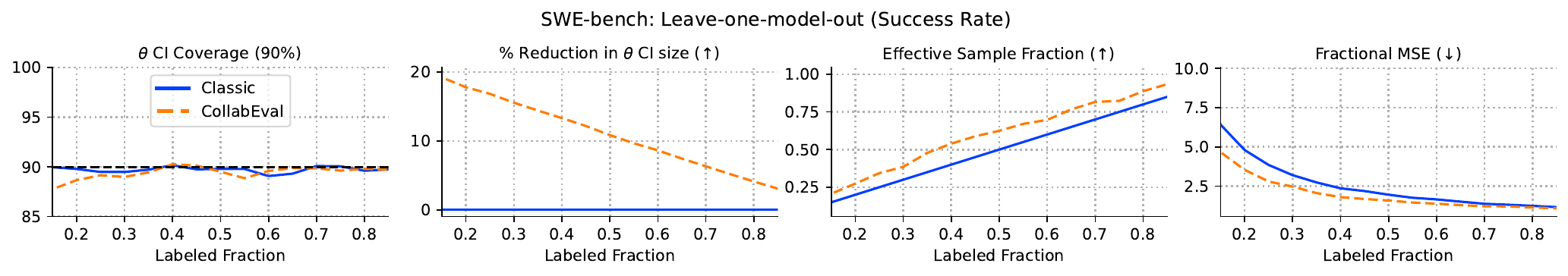}
    \includegraphics[width=1\linewidth]{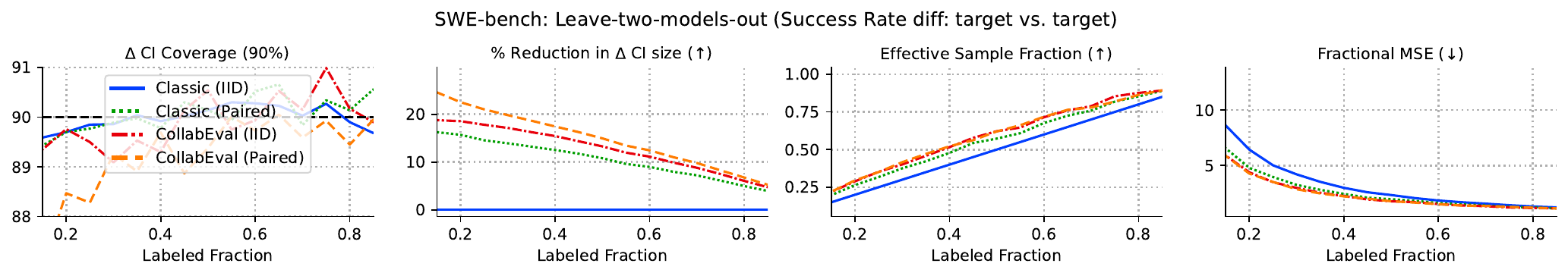}
    \includegraphics[width=1\linewidth]{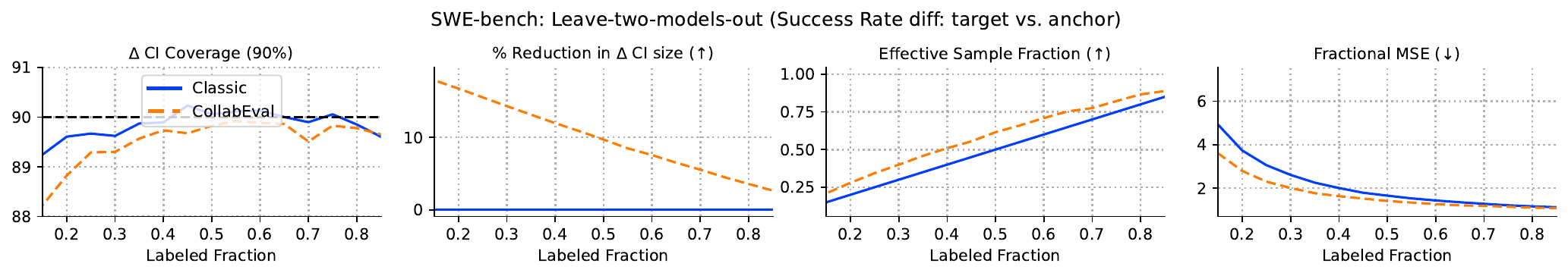}
    \vspace{-15pt}
    \caption{Results for SWE-bench.}
    \label{fig:swebench}
\end{figure}
\clearpage

\subsection{Anchor set ablation}
\label{app:anchor_set}

\begin{figure}[!h]
    \centering
    \includegraphics[width=1\linewidth]{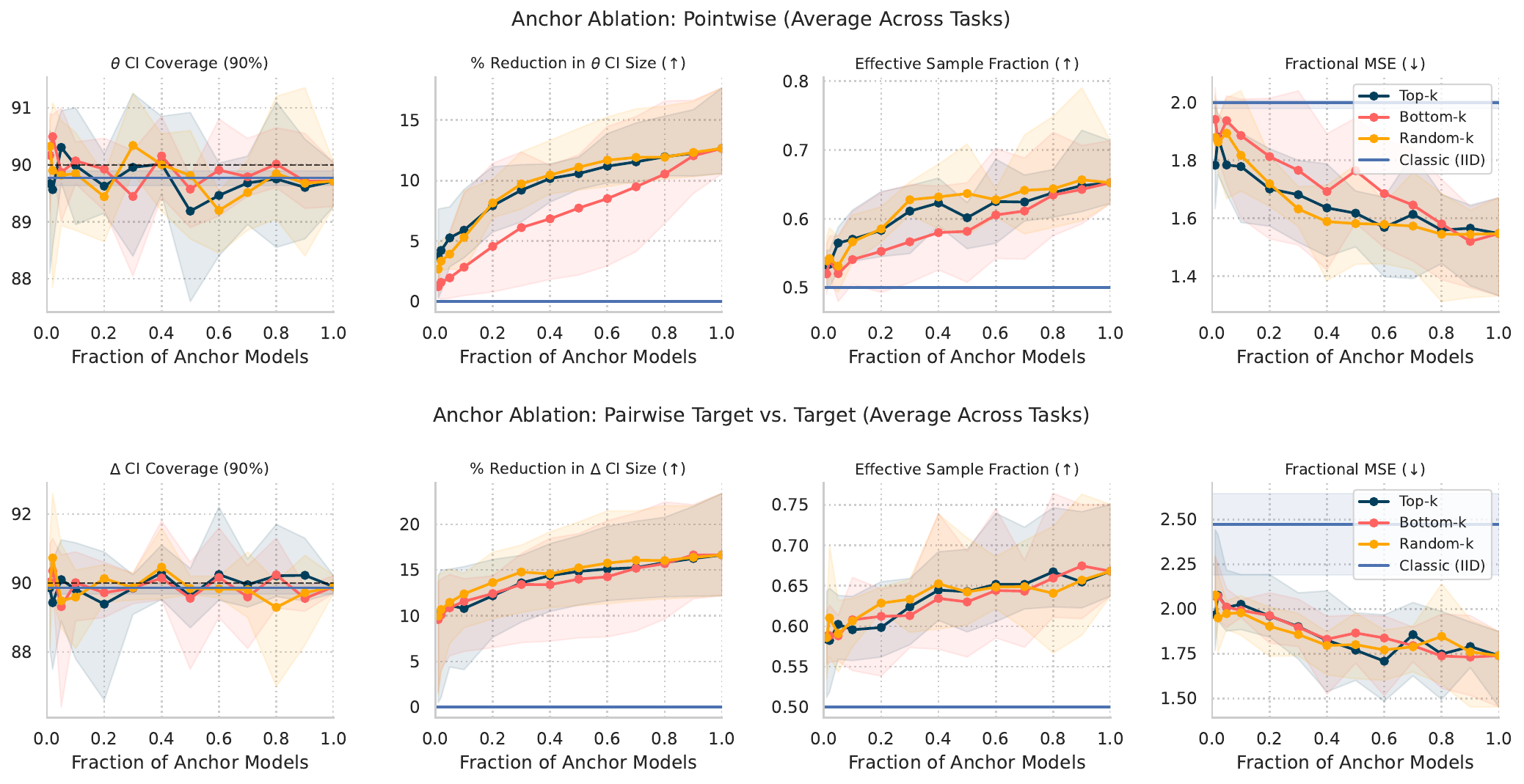}
    \vspace{-12pt}
    \caption{Performance metrics averaged across all five datasets when varying the number of anchor models in the anchor set $\mathcal{A}$, for a fixed labeled data fraction of $50$\%. Shaded bands show the worst and best performance across datasets. We test three selection strategies: (1) Top-k, in which we keep only the highest scoring anchor models in $\mathcal{A}$; Bottom-k, in which we keep only the lowest scoring anchor models in $\mathcal{A}$; and Random-k in which we sample k random models from $\mathcal{A}$. We choose k as a percentage of the total number of available anchors starting at 2\% and always include at least 1 anchor, i.e., \texttt{k = max(1, int(pct * total\_anchors)}). Note that there is some variability in the total number of anchors across tasks. As supported by our theory, CI coverage is largely unaffected by the anchor set, while the estimation efficiency improves as the number of anchor models increases (and becomes more predictive of the target models). As also expected, unbiased selection of anchors (i.e., Random-k) generally gives the best results, by a small margin. We do not report pairwise metrics for target vs. anchor models, as the anchor set changes at each iteration. \looseness=-1}
    \label{fig:anchor_ablation}
\end{figure}
\section{Matrix completion algorithm details and ablations}
\label{sec:implementation}
This appendix presents algorithms for iterative matrix completion (\Cref{alg:isvd}), cross-fold matrix completion (\Cref{alg:xfold_complete}), CollabEval estimation for individual estimands (\Cref{alg:collab_eval_theta}), and estimation for pairwise comparisons (\Cref{alg:collab_eval_delta}). The default sequence of ranks that we use for IterativeSVD is $\mathcal{R} = (1,2,4,8,16,16,16,16)$, and the default number of folds is $K = 10$. These were not extensively tuned; we picked a setting that had a good  tradeoff between speed and predictive ability. IterativeSVD is based on HardImpute~\cite{mazumder2010spectral} which is intended to iterate until convergence with progressively higher ranks.  Section~\ref{sec:isvd_ablation} provides an ablation across datasets at a fixed labeled fraction of 50\% that suggests that performance improves as the max rank in $\mathcal{R}$ increases. Section~\ref{sec:mc_ablation} compares IterativeSVD to several alternative matrix completion algorithms: a neural network baseline, nuclear norm minimization, and an item response theory (IRT) model.

\definecolor{darkgreen}{rgb}{0.31, 0.47, 0.26}
\definecolor{darkorange}{RGB}{153, 51, 0}   
\definecolor{darkblue}{RGB}{0, 0, 180}

\begin{figure}[!h]
{\footnotesize
\begin{algorithm}[H]
\caption{Iterative Matrix Completion via Truncated SVD (\texttt{IterativeSVD})}
\label{alg:isvd}
{
\textbf{Definitions:} $S \in \mathbb{R}^{M \times N}$ is the incomplete label matrix ($M$ models, $N$ items). $\Omega \in \{0, 1\}^{M \times N}$ is the binary mask of observed entries. $\mathcal{R}$ is a sequence of ranks. $\mathtt{SVD}_r(A) = \sum_{k=1}^r \sigma_k u_k v_k^\top$, where $\sigma_k, u_k, v_k$ are the $k$-th singular value and singular vectors of $A$ respectively.
}
\begin{algorithmic}[1]
\State{\textcolor{darkgreen}{\# Initialize missing entries of $S$ with row means.}}
\State{$\hat{Y} = \mathtt{RowMean}(S, \Omega)$}
\For{$r \in \mathcal{R}$}
    \State{\textcolor{darkgreen}{\# Compute rank-$r$ truncated SVD reconstruction and update unobserved entries.}}
    \State{$\hat{Y}_{new} = \mathtt{SVD}_r(\hat{Y})$}
    \State{$\hat{Y}_{ij} = \hat{Y}_{new, ij} \quad \text{for } (i, j) \notin \Omega$}
\EndFor
\Return{$\hat{Y}$}
\end{algorithmic}
\end{algorithm}
}
\end{figure}

\begin{figure}[!h]
{\footnotesize
\begin{algorithm}[H]
\caption{Cross-Fold Matrix Completion (\texttt{XFoldComplete})}
\label{alg:xfold_complete}
{
\textbf{Definitions:} $S \in \mathbb{R}^{M \times N}$ is the incomplete label matrix ($M$ models, $N$ items). $\Omega \in \{0, 1\}^{M \times N}$ is the binary mask of observed entries, where $\mathcal{J}_i = \{j \colon \Omega_{ij} = 1\}$. $\mathcal{T}$ is the set of target models. $K$ is the number of folds. $\mathcal{R}$ is a sequence of ranks. The default is (1, 2, 4, 8, 16, 16, 16, 16).
}
\begin{algorithmic}[1]
\State{Identify active prompts $\mathcal{U} = \bigcup_{i \in \mathcal{T}} \mathcal{J}_i$ and partition them into $K$ disjoint folds $\mathcal{U}_1, \ldots, \mathcal{U}_K$.}
\For{$k = 1, \dots, K$}
    \State{\textcolor{darkgreen}{\# Define training mask and complete fold using iterative SVD.}}
    \State{Define $\Omega_{train}^{(k)}$ such that $\Omega_{train, ij}^{(k)} = 0$ if $i \in \mathcal{T}$ and $j \in \mathcal{U}_k$, and $\Omega_{ij}$ otherwise.}
    \State{$\hat{Y}^{(k)} = \mathtt{IterativeSVD}(S, \Omega_{train}^{(k)}, \mathcal{R})$}
\EndFor
\State{\textcolor{darkgreen}{\# Aggregate predictions for unobserved and anchor entries.}}
\State{$\hat{Y}_{ij} = \hat{Y}^{(k)}_{ij} \text{ if } (i, j) \in \Omega \setminus \Omega_{train}^{(k)}, \text{ and } \frac{1}{K}\sum_{k=1}^K \hat{Y}^{(k)}_{ij} \text{ otherwise.}$} 
\State{}
\Return{$\hat{Y}$}
\end{algorithmic}
\end{algorithm}
}
\end{figure}

\begin{figure}[!h]
{\footnotesize
\centering
\begin{algorithm}[H]
\caption{CollabEval for Single Model Performance (\texttt{EstimateTheta})}
\label{alg:collab_eval_theta}
{
\textbf{Definitions:} $\hat{Y}$ is the completed matrix from Algorithm~\ref{alg:xfold_complete}. $S$ is the incomplete matrix of true evaluation scores. $\Omega$ is the observation mask. $\alpha$ is the desired coverage of the CI for $\theta_i$.
}
\\
{
\textbf{Input:} $i$, the index of the model.
}
\begin{algorithmic}[1]
    \State{\textcolor{darkgreen}{\# Get raw mean estimate + control variate.}}
    \State{$\hat{\mu}_i = \frac{1}{n_i} \sum_{j \in \mathcal{J}_i} S_{ij} \quad \text{where } \mathcal{J}_i = \{j \colon \Omega_{ij} = 1\}\text{ and }n_i = |\mathcal{J}_i|$}
    \State{$\widehat{CV}_i = \frac{1}{n_i} \sum_{j \in \mathcal{J}_i} \hat{Y}_{ij} - \frac{1}{N} \sum_{j=1}^N \hat{Y}_{ij}$\newline}
    
    \State{\textcolor{darkgreen}{\# Power-tune single model scaling, using empirical covariance estimates.}}
    \State{$\hat{\lambda}_i = \frac{\widehat{\mathrm{Cov}}_{j \in \mathcal{J}_i}(S_{ij}, \hat{Y}_{ij})}{\widehat{\mathrm{Var}}_{j \in [N]}(\hat{Y}_{ij})}$\newline}

    \State{\textcolor{darkgreen}{\# Combine raw mean estimate with weighted control variate.}}
    \State{$\hat{\theta}^{\hat{\lambda}_i}_i = \hat{\mu}_i - \hat{\lambda}_i \widehat{CV}_i$\newline}

    \State{\textcolor{darkgreen}{\# Estimate variance for confidence interval.}}
    \State{$\widehat{\sigma}_i^2 = \left[\frac{1}{n_i}\widehat{\mathrm{Var}}_{j \in \mathcal{J}_i}(S_{ij}) - \left(\frac{1}{n_i} - \frac{1}{N}\right) \hat{\lambda}_i^2 \widehat{\mathrm{Var}}_{j \in [N]}(\hat{Y}_{ij})\right]^+$\newline}
\Return{$\left(\hat{\theta}^{\hat{\lambda}_i}_i, \left\{\hat{\theta}^{\hat{\lambda}_i}_i \pm z_{1-\frac{\alpha}{2}} \widehat{\sigma}_i \right\}\right)$}
\end{algorithmic}
\end{algorithm}
}
\end{figure}

\begin{figure}
\centering
\begin{algorithm}[H]
\caption{CollabEval for Model Comparisons (\texttt{EstimateDelta})}
\label{alg:collab_eval_delta}
{
\textbf{Definitions:} $\hat{Y}$ is the completed matrix from Algorithm~\ref{alg:xfold_complete}. $S$ is the incomplete matrix of true evaluation scores. $\Omega$ is the observation mask. $\alpha$ is the desired coverage of the CI for $\Delta_{ij}$.}\\
{
\textbf{Input:} $(i, j)$, the indices of the model pair.
}
\begin{algorithmic}[1]
    \State{\textcolor{darkgreen}{\# Compute overlap metrics.}}
    \State{$\mathcal{J}_i, \mathcal{J}_j = \{k \colon \Omega_{ik} = 1\}, \{k \colon \Omega_{jk} = 1\}$}
    \State{$n_i, n_j, n_{ij}, \gamma_{ij} = |\mathcal{J}_i|, |\mathcal{J}_j|, |\mathcal{J}_i \cap \mathcal{J}_j|, \frac{|\mathcal{J}_i \cap \mathcal{J}_j|}{|\mathcal{J}_i||\mathcal{J}_j|} - \frac{1}{N}$}
    
    \State{\textcolor{darkgreen}{\# Construct $\hat{Q}$ and $\hat{u}$ using empirical covariance estimates.}}
    \State{$\hat{Q} = \begin{pmatrix} \gamma_{ii} \widehat{\mathrm{Var}}_{k \in [N]}(\hat{Y}_{ik}) & -\gamma_{ij} \mathrm{Cov}_{k \in [N]}(\hat{Y}_{ik}, \hat{Y}_{jk}) \\ -\gamma_{ji} \mathrm{Cov}_{k \in [N]}(\hat{Y}_{jk}, \hat{Y}_{ik}) & \gamma_{jj} \widehat{\mathrm{Var}}_{k \in [N]}(\hat{Y}_{jk}) \end{pmatrix}$\newline}
    \State{$\hat{u}~= \begin{pmatrix} \gamma_{ii} \widehat{\mathrm{Cov}}_{k \in \mathcal{J}_i}(S_{ik}, \hat{Y}_{ik}) - \gamma_{ij} \widehat{\mathrm{Cov}}_{k \in \mathcal{J}_j}(\hat{Y}_{ik}, S_{jk}) \\ \gamma_{jj} \widehat{\mathrm{Cov}}_{k \in \mathcal{J}_j}(S_{jk}, \hat{Y}_{jk}) - \gamma_{ij} \widehat{\mathrm{Cov}}_{k \in \mathcal{J}_i}(\hat{Y}_{jk}, S_{ik}) \end{pmatrix}$\newline}

    \State{\textcolor{darkgreen}{\# Solve linear system. Note $\hat{\lambda} = (\hat{\lambda}_i, \hat\lambda_j)^T$ and $\dagger$ is the pseudo-inverse.}}
    \State{$\hat{\lambda} =  \hat{Q}^{\dagger} \hat{u}$\newline}

    \State{\textcolor{darkgreen}{\# Compute difference estimate from mean estimates using $\hat{\lambda}$ (same as Algorithm~\ref{alg:collab_eval_theta}).}}
    \State{$\hat{\Delta}^{\hat{\lambda}}_{ij} = \hat{\theta}^{\hat{\lambda}_i}_i - \hat{\theta}^{\hat{\lambda}_j}_j$\newline}
    
    \State{\textcolor{darkgreen}{\# Estimate variance for confidence interval (sample covariance evaluates to 0 if $|\mathcal{J}_i \cap \mathcal{J}_j| \le 1$).}}
    \State{$\widehat{\sigma}_{ij}^2 = \left[\frac{1}{n_i}\widehat{\mathrm{Var}}_{k \in \mathcal{J}_i}(S_{ik}) + \frac{1}{n_j}\widehat{\mathrm{Var}}_{k \in \mathcal{J}_j}(S_{jk}) - 2 \frac{n_{ij}}{n_in_j} \widehat{\mathrm{Cov}}_{k \in \mathcal{J}_i \cap \mathcal{J}_j}(S_{ik}, S_{jk}) - \hat{\lambda}^\top \hat{u}\right]^{+}$\newline}
\Return{$\left(\hat{\Delta}^{\hat{\lambda}}_{ij}, \left\{\hat{\Delta}^{\hat{\lambda}_{ij}} \pm z_{1-\frac{\alpha}{2}} \widehat{\sigma}_{ij} \right\}\right)$}
\end{algorithmic}
\end{algorithm}
\end{figure}

\clearpage
\subsection{IterativeSVD ablations}
\label{sec:isvd_ablation}

Table~\ref{tab:rank_ablation} provides an ablation for IterativeSVD (\Cref{alg:isvd}) across datasets at a fixed labeled fraction of 50\%, showing that increasing the max rank in $\mathcal{R}$ leads to better performance (it starts to plateau around rank 16). We progress the  rank sequence in powers of 2 from $(1)$, $(1,2)$, \dots, $(1,2,\dots, 32)$).

\begin{table}[ht]
\centering

\begin{tabular}{lcccccc}
\toprule
Task & rank=1 & rank=2 & rank=4 & rank=8 & rank=16 & rank=32 \\
\midrule
WMT & +8.4\% & +10.7\% & +11.4\% & +11.6\% & +12.4\% & +13.4\% \\
AQA NLI & +9.3\% & +11.2\% & +13.4\% & +16.8\% & +17.5\% & +17.9\% \\
MMLU & +5.0\% & +5.7\% & +8.2\% & +9.5\% & +10.6\% & +10.8\% \\
Alpaca Eval & +5.9\% & +8.1\% & +9.5\% & +10.8\% & +11.4\% & +12.0\% \\
SWE-bench & +9.3\% & +10.3\% & +10.3\% & +10.2\% & +10.7\% & +10.8\% \\
\midrule
Average & +7.6\% & +9.2\% & +10.5\% & +11.8\% & +12.5\% & +13.0\% \\
\bottomrule
\end{tabular}
\caption{IterativeSVD rank ablation at a fixed 50\% labeled fraction. Values represent the \% CI size reduction vs Classic (IID) (higher is better).}
\label{tab:rank_ablation}
\end{table}

Table~\ref{tab:fold_sweep} provides a similar ablation over the number of folds $K$, again at a fixed 50\% labeled fraction. We find that IterativeSVD is empirically not particularly sensitive to this choice---though, in theory, the quality of the matrix completion can improve with higher $K$ (more observed data per completion) which can narrow CI width. CI coverage remains unaffected, consistent with our theoretical results.

\begin{table}[ht]
\centering

\begin{tabular}{lcccc}
\toprule
Task & $K=2$ & $K=3$ & $K=5$ & $K=10$ \\
\midrule
WMT & 90.1\%, +12.7\% & 89.9\%, +13.2\% & 89.6\%, +12.4\% & 90.0\%, +12.0\% \\
AQA NLI & 88.1\%, +17.8\% & 89.7\%, +17.4\% & 91.6\%, +17.4\% & 89.1\%, +17.6\% \\
MMLU & 90.1\%, +10.6\% & 89.4\%, +10.6\% & 89.2\%, +10.6\% & 89.9\%, +10.6\% \\
Alpaca Eval & 89.6\%, +11.2\% & 90.5\%, +11.4\% & 89.5\%, +11.5\% & 90.2\%, +11.5\% \\
SWE-bench & 89.5\%, +10.6\% & 91.2\%, +10.7\% & 90.5\%, +10.6\% & 89.8\%, +10.7\% \\
\midrule
Average & 89.5\%, +12.6\% & 90.1\%, +12.7\% & 90.1\%, +12.5\% & 89.8\%, +12.5\% \\
\bottomrule
\end{tabular}
\caption{IterativeSVD fold ablation at a fixed 50\% labeled fraction. Values represent average coverage and \% CI size reduction vs Classic (IID), using default ranks $(1,2,4,8,16,16,16,16)$.}
\label{tab:fold_sweep}
\end{table}

\subsection{Comparison to other matrix completion options}
\label{sec:mc_ablation}

Table~\ref{tab:mc_ablation} compares CollabEval across four matrix completion methods at a fixed 50\% labeled data fraction: IterativeSVD, a neural network (NN), nuclear norm minimization, and a 2-parameter logistic (2PL) item response theory (IRT) model. The NN baseline uses a 1-layer, 64-dimensional multilayer perceptron applied to the vector of anchor model scores to predict target model scores, trained with either a regression or binary cross-entropy loss depending on the task. The 2PL IRT model applies only to the two binary classification datasets (MMLU and SWE-bench). 

All evaluated methods reduce confidence interval size relative to classical estimation. The NN yields the largest average reduction (+13.5\%) but exhibits higher variance across individual tasks at lower labeled data fractions. IterativeSVD provides a better tradeoff of computational speed, implementation simplicity, and consistent empirical performance. The statistical validity of CollabEval remains independent of the specific matrix completion algorithm used.

\begin{table}[ht]
\centering

\begin{tabular}{lcccc}
\toprule
Task & 2PL IRT & IterativeSVD & NN & Nuclear Norm \\
\midrule
WMT & N/A & +12.9\% & +12.6\% & +9.5\% \\
AQA & N/A & +17.3\% & +19.1\% & +15.4\% \\
MMLU & +1.7\% & +10.6\% & +15.0\% & +9.5\% \\
Alpaca Eval & N/A & +11.5\% & +9.5\% & +5.9\% \\
SWE-bench & +5.9\% & +10.5\% & +11.2\% & +10.6\% \\
\midrule
Average & +3.8\% & +12.5\% & +13.5\% & +10.2\% \\
\bottomrule
\end{tabular}
\caption{Matrix completion method ablation at a fixed 50\% labeled fraction. Values represent the \% reduction in pointwise $\theta$ CI size relative to the classic (IID) estimate.}
\label{tab:mc_ablation}
\end{table}
\section{Proofs}
\label{app:proofs}

\subsection{Proof of Theorem~\ref{thm:collab_clt}}


We proceed in two steps, following a similar analysis as \citet{zrnic2025cross}. First, we define an idealized estimator $\tilde{\theta}$ using the deterministic population-limit predictions $\hat{\bunderline{S}}_{ij}$, and prove its asymptotic normality via standard central limit theorem arguments. Then, we show that the empirical estimator $\hat{\theta}$ that uses cross-prediction is asymptotically equivalent.

\begin{lemma}
\label{lemm:idealized}
Define the idealized predictor:
\begin{align}
\tilde{\theta}_i &= \frac{1}{|\mathcal{J}_i|} \sum_{j \in \mathcal{J}_i} S_{ij} - \left(\frac{1}{|\mathcal{J}_i|} \sum_{j \in \mathcal{J}_i} \hat{\bunderline{S}}_{ij} - \frac{1}{N}\sum_{j=1}^N \hat{\bunderline{S}}_{ij}\right).
\end{align}
Then, under the conditions of Theorem~\ref{thm:collab_clt}, $\sqrt{N}(\tilde{\theta} - \theta) \xrightarrow{d} \mathcal{N}(0, \Sigma)$, where
\begin{gather}
\Sigma_{ij} = \frac{p_{ij}}{p_i p_j} \mathrm{Cov}\left(S_i, S_j\right) + \left(\frac{p_{ij}}{p_i p_j} - 1\right) \left[ \mathrm{Cov}\left(\hat{\bunderline{S}}_i, \hat{\bunderline{S}}_j\right) - \mathrm{Cov}\left(\hat{\bunderline{S}}_i, S_j\right) - \mathrm{Cov}\left(S_i, \hat{\bunderline{S}}_j\right) \right].
\end{gather}
\end{lemma}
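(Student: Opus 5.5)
We rewrite each coordinate of $\tilde{\theta}-\theta$ as a normalized sum over the $N$ i.i.d. prompts and apply a multivariate central limit theorem. The observation mask $\Omega$ is independent of the prompts, so we condition on $\Omega$ throughout; the resulting limit will not depend on the realized mask, which lets us remove the conditioning at the end. Write $\mu_i = \mathbb{E}[\hat{\bunderline{S}}_{i}]$ and introduce the centered variables $A_{ij} = S_{ij}-\theta_i$ and $B_{ij} = \hat{\bunderline{S}}_{ij}-\mu_i$. Because $\hat{\bunderline{S}}_{ij}=h_i(X_j)$ is a deterministic function of prompt $j$'s anchor scores, the vectors $(A_{\cdot j},B_{\cdot j})$ are i.i.d. across $j$ with finite covariance. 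With $w_{ij} = \Omega_{ij}N/n_i$, this gives
\begin{equation}
\sqrt{N}(\tilde{\theta}_i-\theta_i) = \frac{1}{\sqrt{N}}\sum_{j=1}^N \Big[ w_{ij} A_{ij} - (w_{ij}-1) B_{ij}\Big] =: \frac{1}{\sqrt{N}}\sum_{j=1}^N \xi_{ij}.
\end{equation}

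Conditional on $\Omega$, the summands $\xi_{\cdot j}\in\mathbb{R}^M$ are independent and mean zero, but they are not identically distributed because the weights vary with $j$. We therefore compute the conditional covariance of the sum. For coordinates $i,i'$ we have
\begin{equation}
\frac{1}{N}\sum_j \mathbb{E}[\xi_{ij}\xi_{i'j}\mid\Omega] = \frac1N\sum_j\Big[w_{ij}w_{i'j}C_{AA} - w_{ij}(w_{i'j}-1)C_{AB} - (w_{ij}-1)w_{i'j}C_{BA} + (w_{ij}-1)(w_{i'j}-1)C_{BB}\Big],
\end{equation}
where $C_{AA}=\mathrm{Cov}(S_i,S_{i'})$ and the other $C$'s are the analogous covariances. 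The weight sums evaluate to $\frac1N\sum_j w_{ij}w_{i'j} = Nn_{ii'}/(n_in_{i'})\to p_{ii'}/(p_ip_{i'})$ and $\frac1N\sum_j w_{ij}=1$. Substituting these, the covariance becomes $\frac{p_{ii'}}{p_ip_{i'}}C_{AA} + (\frac{p_{ii'}}{p_ip_{i'}}-1)(C_{BB}-C_{AB}-C_{BA})$, which is exactly $\Sigma_{ii'}$. Here $p_{ii}=p_i$, which recovers the diagonal formula.

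To obtain normality we use the Cramér--Wold device: fix $a\in\mathbb{R}^M$ and apply the Lindeberg--Feller CLT to the triangular array $a^\top\xi_{\cdot j}/\sqrt N$, conditional on $\Omega$. The \textbf{main obstacle} is verifying the Lindeberg condition, since the summands are not identically distributed. The weights are bounded by $N/n_i\to 1/p_i<\infty$ because $p_i>0$. This gives $|a^\top\xi_{\cdot j}|\le C\sum_i(|A_{ij}|+|B_{ij}|)=:CZ_j$, where the $Z_j$ are i.i.d. with finite second moment. Hence $\frac1N\sum_j\mathbb{E}[(a^\top\xi_{\cdot j})^2\mathbf{1}\{|a^\top\xi_{\cdot j}|>\epsilon\sqrt N\}]\le C^2\mathbb{E}[Z^2\mathbf{1}\{CZ>\epsilon\sqrt N\}]\to0$ by dominated convergence. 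The conditional variance converges to $a^\top\Sigma a$; it converges in probability wherever $n_{ij}/N$ does, so we argue along subsequences converging almost surely.

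This yields conditional convergence of the characteristic functions to that of $\mathcal{N}(0,a^\top\Sigma a)$. Because the characteristic functions are bounded, dominated convergence over $\Omega$ gives the unconditional limit. A degenerate $a^\top\Sigma a=0$ causes no difficulty: in that case the variance bound gives convergence to $0$ directly.
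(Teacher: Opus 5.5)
Your proposal is correct and takes essentially the same route as the paper. You write $\tilde{\theta}$ as an average of per-prompt vectors with weights $\Omega_{ij}N/n_i$, condition on $\Omega$, verify Lindeberg through the bounded weights, and compute the conditional covariance from the weight sums $n_i$ and $n_{ii'}$, giving $\gamma^N_{ii'} = Nn_{ii'}/(n_in_{i'}) \to p_{ii'}/(p_ip_{i'})$. Your Cram\'er--Wold/subsequence argument and the bounded-characteristic-function step for removing the conditioning are a more careful version of the paper's informal ``conditional CLT with $N$-dependent covariance, then Slutsky'' conclusion.
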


\begin{proof}
We begin by rewriting $\tilde{\theta}$ as
$$\tilde{\theta} = \frac{1}{N} \sum_{j=1}^N Z_j,$$
where $Z_{j}$ is an $M$-dimensional  random vector  with
$$Z_{j}(i) = \hat{\bunderline{S}}_{ij} + \frac{N}{n_i} \Omega_{ij} (S_{ij} - \hat{\bunderline{S}}_{ij}).$$

Because the $N$ evaluation prompts are drawn i.i.d. and the population-limit prediction $\hat{\bunderline{S}}_{ij} = h_i(X_j)$ where $X_j = \{S_{kj} \colon \Omega_{kj} = 1, k \in \mathcal{A}\}$ is a deterministic function of only the observed anchor model scores for the $j$-th prompt (Assumption~\ref{ass:stable}), the vectors $Z_1, Z_2, \dots, Z_N$ are conditionally independent given  the observation mask $\Omega$, and the observed anchor model scores.

To establish asymptotic normality, we view $\{Z_j\}_{j=1}^N$ as a triangular array, since the scaling coefficients $\frac{N}{n_i}$ depend on $N$. By assumption, the sampling budgets scale deterministically as $n_i/N \to p_i \in (0, 1]$, meaning $\frac{N}{n_i}$ are bounded by a constant for sufficiently large $N$. Because $(S_{ij}, \hat{\bunderline{S}}_{ij})$ have finite covariance by assumption, and $Z_j$ is a linear combination of these variables with bounded coefficients and $\Omega_{ij} \in \{0, 1\}$, the array is bounded by a square-integrable distribution; this  implies the Lindeberg condition.\looseness=-1

Applying the Multivariate Lindeberg-Feller CLT to this array, we have:
$$\sqrt{N}(\tilde{\theta} - \mathbb{E}[\tilde{\theta} \mid \Omega]) \mid \Omega \xrightarrow{d} \mathcal{N}(0, \Sigma_N(\Omega))$$
for some conditional covariance matrix $\Sigma_N(\Omega)$. Furthermore, taking the conditional expectation yields $\mathbb{E}[Z_{ij} \mid \Omega] = \mathbb{E}[\hat{\underline{S}}_{ij}] + \frac{N}{n_i} \Omega_{ij} \mathbb{E}[S_{ij} - \hat{\underline{S}}_{ij}]$. Summing over $j$ and substituting $\sum_{j=1}^N \Omega_{ij} = n_i$ deterministically gives $\mathbb{E}[\tilde{\theta} \mid \Omega] = \theta$. Therefore, we can substitute:
$$\sqrt{N}(\tilde{\theta} - \theta) \mid \Omega \xrightarrow{d} \mathcal{N}(0, \Sigma_N(\Omega)).$$

We now calculate $\Sigma_N(\Omega)$. By the conditional independence of $Z_j$:
$$ \Sigma_N(\Omega) = \mathrm{Var}(\sqrt{N}\tilde{\theta} \mid \Omega) = \frac{1}{N} \sum_{j=1}^N \mathrm{Var}(Z_j \mid \Omega).$$
Looking at the covariance between the $i$-th and $k$-th components of $Z_j$, we have
$$ \text{Cov}(Z_{ij}, Z_{kj} \mid \Omega) = \text{Cov}\left( \hat{\bunderline{S}}_{ij} + \frac{N}{n_i}\Omega_{ij}(S_{ij} - \hat{\bunderline{S}}_{ij}), \hat{\bunderline{S}}_{kj} + \frac{N}{n_k}\Omega_{kj}(S_{kj} - \hat{\bunderline{S}}_{kj}) \right). $$
Expanding this via bilinearity of covariance  and summing over $j$ yields:
\begin{align}
[\Sigma_N(\Omega)]_{ik} &= \frac{1}{N} \sum_{j=1}^N \Bigg [\mathrm{Cov}(\hat{\bunderline{S}}_{ij}, \hat{\bunderline{S}}_{kj}) + \frac{N}{n_k}\Omega_{kj} \mathrm{Cov}(\hat{\bunderline{S}}_{ij}, S_{kj} - \hat{\bunderline{S}}_{kj}) \\
&\quad\quad\quad\quad\quad+ \frac{N}{n_i}\Omega_{ij} \mathrm{Cov}(S_{ij} - \hat{\bunderline{S}}_{ij}, \hat{\bunderline{S}}_{kj}) + \frac{N^2}{n_i n_k}\Omega_{ij}\Omega_{kj} \mathrm{Cov}(S_{ij} - \hat{\bunderline{S}}_{ij}, S_{kj} - \hat{\bunderline{S}}_{kj})\Bigg].
\end{align}

Because the underlying prompts and prompt features ($X_j$, that $\hat{\bunderline{S}}_{ij}$ are fixed functions of) are i.i.d., the population covariances do not depend on $j$. Using the identities $\sum_{j=1}^N \Omega_{ij} = n_i$, $\sum_{j=1}^N \Omega_{kj} = n_k$, and $\sum_{j=1}^N \Omega_{ij}\Omega_{kj} = n_{ik}$ from our sampling design, the equation simplifies to:
\begin{align*}
[\Sigma_N(\Omega)]_{ik}
&= \mathrm{Cov}(\hat{\bunderline{S}}_i, \hat{\bunderline{S}}_k) + \left(\frac{1}{N}\frac{N}{n_k}n_k\right) \mathrm{Cov}(\hat{\bunderline{S}}_i, S_k - \hat{\bunderline{S}}_k) + \left(\frac{1}{N}\frac{N}{n_i}n_i\right) \text{Cov}(S_i - \hat{\bunderline{S}}_i, \hat{\bunderline{S}}_k) \\
&\quad + \left(\frac{1}{N}\frac{N^2}{n_i n_k}n_{ik}\right) \mathrm{Cov}(S_i - \hat{\bunderline{S}}_i, S_k - \hat{\bunderline{S}}_k) \\
&= \mathrm{Cov}(\hat{\bunderline{S}}_i, \hat{\bunderline{S}}_k) + \mathrm{Cov}(\hat{\bunderline{S}}_i, S_k) - \mathrm{Cov}(\hat{\bunderline{S}}_i, \hat{\bunderline{S}}_k) + \mathrm{Cov}(S_i, \hat{\bunderline{S}}_k) - \text{Cov}(\hat{\bunderline{S}}_i, \hat{\bunderline{S}}_k) \\
&\quad + \frac{N n_{ik}}{n_i n_k} \left[ \mathrm{Cov}(S_i, S_k) - \mathrm{Cov}(S_i, \hat{\bunderline{S}}_k) - \mathrm{Cov}(\hat{\bunderline{S}}_i, S_k) + \mathrm{Cov}(\hat{\bunderline{S}}_i, \hat{\bunderline{S}}_k) \right].
\end{align*}

Letting $\gamma_{ik}^N = \frac{N n_{ik}}{n_i n_k}$ and grouping terms, we finally arrive at the following conditional covariance matrix for our triangular array:
\begin{align*}
\Sigma_N(\Omega)_{ik} &= \gamma_{ik}^N \mathrm{Cov}(S_i, S_k) + (\gamma_{ik}^N - 1) \big[ \mathrm{Cov}(\hat{\bunderline{S}}_i, \hat{\bunderline{S}}_k) - \mathrm{Cov}(\hat{\bunderline{S}}_i, S_k) - \mathrm{Cov}(S_i, \hat{\bunderline{S}}_k) \big].
\end{align*}

By our sampling assumptions, as $N \to \infty$, the deterministic budgets scale as $n_i/N \to p_i$ and $n_k/N \to p_k$, while the random overlap fraction converges in probability: $n_{ik}/N \xrightarrow{p} p_{ik}$. By the Continuous Mapping Theorem, the coefficient $\gamma_{ik}^N$ converges in probability:
$$ \gamma_{ik}^N = \frac{n_{ik}/N}{(n_i/N)(n_k/N)} \xrightarrow{p} \frac{p_{ik}}{p_i p_k}.$$
Consequently, $\Sigma_N(\Omega)_{ik}$ converges in probability to a constant limit matrix $\Sigma$:
$$ \Sigma_N(\Omega) \xrightarrow{p} \Sigma, $$
where the entries of $\Sigma$ are exactly:
$$ \Sigma_{ik} = \frac{p_{ik}}{p_i p_k} \mathrm{Cov}(S_i, S_k) + \left(\frac{p_{ik}}{p_i p_k} - 1\right) \big[ \mathrm{Cov}(\hat{\bunderline{S}}_i, \hat{\bunderline{S}}_k) - \mathrm{Cov}(\hat{\bunderline{S}}_i, S_k) - \mathrm{Cov}(S_i, \hat{\bunderline{S}}_k) \big]. $$
Since $\sqrt{N}(\tilde{\theta} - \theta) \mid \Omega \xrightarrow{d} \mathcal{N}(0, \Sigma_N(\Omega))$ and $\Sigma_N(\Omega) \xrightarrow{p} \Sigma$, we can apply Slutsky's Theorem to conclude unconditionally that $\sqrt{N}(\tilde{\theta} - \theta) \xrightarrow{d} \mathcal{N}(0, \Sigma)$.
\end{proof}

We now are ready to state our second lemma.

\begin{lemma}
\label{lemma:convergence}
Under the conditions of Theorem~\ref{thm:collab_clt}, as $N \to \infty$, we have $\sqrt{N}(\hat{\theta} - \tilde{\theta}) \xrightarrow{p} 0$.
\end{lemma}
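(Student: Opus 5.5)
The plan is to follow the cross-fitting argument of \citet{zrnic2025cross}: write the difference $\hat{\theta}_i - \tilde{\theta}_i$ as a sum of centered fold-wise errors and show that each piece is $o_p(N^{-1/2})$ via a conditional second-moment bound driven by Assumption~\ref{ass:stable}. Since $\hat{\theta}$ and $\tilde{\theta}$ share the term $\frac{1}{|\mathcal{J}_i|}\sum_{j\in\mathcal{J}_i} S_{ij}$, it cancels. Writing $D_{ij} = \hat{Y}_{ij} - \hat{\bunderline{S}}_{ij}$, this leaves
\begin{equation}
\hat{\theta}_i - \tilde{\theta}_i = -\frac{1}{n_i}\sum_{j \in \mathcal{J}_i} D_{ij} + \frac{1}{N}\sum_{j=1}^N D_{ij}.
\end{equation}
Each term then decomposes by fold. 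By \eqref{eq:cross_fold_matrix}, for $j \in \mathcal{J}_i^{(k)}$ we have $D_{ij} = \hat{S}^{(k)}_{ij} - \hat{\bunderline{S}}_{ij}$, and otherwise $D_{ij} = \frac{1}{K}\sum_k (\hat{S}^{(k)}_{ij} - \hat{\bunderline{S}}_{ij})$. It therefore suffices to control, for each fixed $k$ (with $K$ fixed), averages of the form $\frac{1}{N}\sum_{j \in B} (\hat{S}^{(k)}_{ij} - \hat{\bunderline{S}}_{ij})$, where $B$ is either the fold $\mathcal{J}_i^{(k)}$ or a set of prompts whose $S_{ij}$ was not used by $\hat{S}^{(k)}$. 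The coefficients $N/n_i$ are bounded, so the normalizations are harmless.

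The key device is centering. Let $\mu_k = \mathbb{E}[\hat{S}^{(k)}_{ij} - \hat{\bunderline{S}}_{ij} \mid \Omega^{(k)}, \text{training data of fold }k]$, a common conditional mean that is the same for all held-out $j$ by exchangeability of the prompts. Consider $\sqrt{N}$ times the held-out contribution of fold $k$ in the first sum minus the corresponding contribution in the second sum. The constants $\mu_k$ enter with weights $\frac{|\mathcal{J}_i^{(k)}|}{n_i}$ and (roughly) $\frac{1}{K}$. These weights agree up to $O_p(N^{-1/2})$ fluctuations, because the folds are a uniform random partition. So the $\mu_k$ terms nearly cancel, and what remains is $O_p(1)\cdot\mu_k$. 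This needs $\mu_k \to 0$ in probability, which I would extract from Assumption~\ref{ass:stable}.

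The centered remainders are averages of terms that are conditionally independent given the training data, since the held-out prompts are i.i.d. and independent of the fold-$k$ fit. Their conditional variance is therefore $\frac{1}{|B|}\mathrm{Var}(\hat{S}^{(k)}_{ij} - \hat{\bunderline{S}}_{ij} \mid \Omega^{(k)})$. Multiplying by $N$ gives a quantity of order $K\,\mathrm{Var}(\cdot \mid \Omega^{(k)})$, which tends to $0$ in $L_1$ by Assumption~\ref{ass:stable}. A conditional Chebyshev inequality, followed by taking expectations (dominated convergence on the bounded probability), then gives $o_p(1)$ after the $\sqrt{N}$ scaling. The same argument handles the prompts in $\mathcal{U}_k$ that are not in $\mathcal{J}_i$ and the unlabeled prompts, each averaged over $k$. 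Summing finitely many $o_p(1)$ terms over $k$ and over coordinates $i$ then gives $\sqrt{N}(\hat{\theta}-\tilde{\theta}) \xrightarrow{p} 0$.

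\textbf{Main obstacle.} I expect the hardest part to be the unlabeled/averaged terms. There, $\hat{S}^{(k)}_{ij}$ for a prompt $j$ outside fold $k$ may depend on $S_{ij}$ when $j$ lies in another fold's training set, so conditional independence fails. My first attempt would be to regroup the second sum so that each $j \in \mathcal{U}_{k'}$ is paired with the predictor $\hat{S}^{(k')}$ that held it out. The leftover cross terms $\hat{S}^{(k)}_{ij} - \hat{S}^{(k')}_{ij}$ would then be bounded using stability, which forces all fold predictors toward the same $h_i$. If that regrouping is not clean, the fallback is to read Assumption~\ref{ass:stable} as also controlling the fluctuation at in-sample points, as in \citet{zrnic2025cross}.
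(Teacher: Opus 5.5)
Your route is essentially the paper's. Both arguments follow the cross-fitting proof of \citet{zrnic2025cross}: split $\sqrt{N}(\hat{\theta}_i-\tilde{\theta}_i)$ into fold blocks and bound each block through a conditional second moment controlled by Assumption~\ref{ass:stable}. Your conditional Chebyshev step is interchangeable with the paper's use of $\mathbb{E}[\min(1,|V_N|)]$, subadditivity and Jensen.

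The paper, however, does not center. It puts the held-out terms of fold $k$, together with the $\frac{1}{K}$-weighted terms over the unlabeled set $\mathcal{G}_i=\{1,\dots,N\}\setminus\mathcal{J}_i$, into one block $V_{N,k}$. It asserts that all of these terms are conditionally independent given $\Omega^{(k)}$, so it simply adopts your fallback for the ``main obstacle.'' It then bounds $\mathbb{E}[|V_{N,k}|\mid\Omega^{(k)}]$ by $\sqrt{\mathrm{Var}(V_{N,k}\mid\Omega^{(k)})}$, which tacitly needs $\mathbb{E}[V_{N,k}\mid\Omega^{(k)}]=0$.

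Your centering exposes this hidden requirement. The net weight on fold $k$'s predictor is $a_k=\sqrt{N}\big(|\mathcal{J}_i^{(k)}|(\frac{1}{N}-\frac{1}{n_i})+\frac{N-n_i}{KN}\big)$. These satisfy $\sum_k a_k=0$ exactly. But each $a_k$ is a nondegenerate $O_p(1)$ whenever $|\mathcal{J}_i^{(k)}|$ fluctuates at scale $\sqrt{N}$ around $n_i/K$ (e.g., under IID sampling), so the remainder $\sum_k a_k\mu_k$ survives.

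The step that fails as written is the claim that $\mu_k\to 0$ can be extracted from Assumption~\ref{ass:stable}. That assumption controls only a conditional variance. Offsets $\hat{S}^{(k)}_{ij}=\hat{\bunderline{S}}_{ij}+c(\Omega^{(k)})$ have zero conditional variance, yet $\sum_k a_k c(\Omega^{(k)})$ need not vanish. Since $\sum_k a_k=0$, what you actually need is that the conditional biases agree asymptotically across folds, and across the different column types inside $\mathcal{G}_i$.

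The paper never argues this. The closest it comes is the assertion, in the proof of Theorem~\ref{thm:empirical_cov}, that the conditional bias vanishes because $\hat{\bunderline{S}}$ is the population limit. So this is a gap you share with the paper rather than one the paper closes. The requirement does hold exactly under paired sampling with equal-size folds. There $a_k=0$, and all columns of $\mathcal{J}_i^{(k)}\cup\mathcal{G}_i$ carry no target observations under $\Omega^{(k)}$. For a column-equivariant $\mathcal{M}$ they therefore share one conditional mean, which cancels.

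Two smaller points.
\begin{itemize}
\item For $j\in\mathcal{G}_i$ the score $S_{ij}$ is never observed, so $\hat{S}^{(k)}_{ij}$ cannot depend on it. The dependence you worry about comes from other targets' observed scores $S_{i'j}$ in that column, which is possible only under IID sampling.
\item A transductive completion such as IterativeSVD also uses the anchor scores of the held-out columns. Conditioning on ``the training data'' would therefore make every $\hat{S}^{(k)}_{ij}-\hat{\bunderline{S}}_{ij}$ deterministic, and your common $\mu_k$ would not exist. You should condition on $\Omega^{(k)}$, as the assumption does, and accept, as the paper implicitly does, that conditional independence across columns is an idealization.
\end{itemize}
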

\begin{proof}
We start by writing
\begin{align}
\sqrt{N}(\hat{\theta}_i - \tilde{\theta}_i) = \sqrt{N}\left(\frac{1}{N} \sum_{j=1}^N (\hat{Y}_{ij} - \hat{S}_{ij}) + \frac{N}{n_i} \Omega_{ij}(\hat{S}_{ij} - \hat{Y}_{ij})\right).
\end{align}
Let $w_{ij} = \sqrt{N} \left( \frac{1}{N} - \frac{\Omega_{ij}}{n_i} \right)$  and define $V_{N,k}$ as
\begin{align}
 V_{N,k} &= \sum_{j \in \mathcal{J}_i^{(k)}} w_{ij} (\hat{S}_{ij}^{(k)} - \hat{\bunderline{S}}_{ij}) + \frac{1}{K} \sum_{j \in \mathcal{G}_i} \frac{1}{\sqrt{N}} (\hat{S}_{ij}^{(k)} - \hat{\bunderline{S}}_{ij}) \\
 &= \sum_{j \in \mathcal{J}_i^{(k)}} w_{ij} \Gamma_{ij} + \frac{1}{K} \sum_{j \in \mathcal{G}_i} \frac{1}{\sqrt{N}} \Gamma_{ij}
\end{align}
where $\Gamma_{ij} = \hat{S}_{ij}^{(k)} - \hat{\bunderline{S}}_{ij}$, $\mathcal{J}_i^{(k)}$ are the observed indices in fold $k$, and $\mathcal{G}_i = \{N\} \setminus \mathcal{J}_i$ are the unobserved indices. Let $V_N = \sum_{k=1}^K V_{N,k}$. Note that $V_N = \sqrt{N}(\hat{\theta}_i - \tilde{\theta}_i)$.

Using Fact 1 from ~\citet{zrnic2025cross}, $V_N \xrightarrow{p} 0$ iff $\mathbb{E}[\min(1, |V_N|)] \to 0$. We now apply the same analysis as in Lemma A.1 and A.2 of \citet{zrnic2025cross}. Let $\psi(x) = \min(1, x)$. Since $\psi$ is subadditive and $V_N = \sum_{k=1}^K V_{N,k}$,
$$\mathbb{E}[\psi(|V_N|)] \le \sum_{k=1}^K \mathbb{E}[\psi(|V_{N,k}|)]$$
Since $\psi$ is concave, we condition on the mask $\Omega^{(k)}$ used for the $k$-th fold, and apply Jensen's:
\begin{align}
\mathbb{E}[\psi(|V_{N,k}|)] &= \mathbb{E}[\mathbb{E}[\psi(|V_{N,k}|) \mid \Omega^{(k)}]] \\
&\le \mathbb{E} \left[ \psi \left( \mathbb{E}[|V_{N,k}| \mid \Omega^{(k)}] \right) \right] \\
&\le \mathbb{E} \left[ \psi \left( \sqrt{\text{Var}(V_{N,k} \mid \Omega^{(k)})} \right) \right].
\end{align}
Since the indices $j \in \mathcal{J}_i^{(k)}$ are held-out from the training of $\hat{S}^{(k)}$, the terms in the $V_{N,k}$ are conditionally independent given $\Omega^{(k)}$, so we can write:
$$\text{Var}(V_{N,k} \mid \Omega^{(k)}) = \sum_{j \in \mathcal{J}_i^{(k)}} w_{ij}^2 \text{Var}(\Gamma_{ij} \mid \Omega^{(k)}) + \frac{1}{K^2 N} \sum_{j \in \mathcal{G}_i} \text{Var}(\Gamma_{ij} \mid \Omega^{(k)}).$$
Note that for $j \in \mathcal{J}_i^{(k)}$, the weights $w_{ij}^2 = N(\frac{1}{N} - \frac{1}{n_i})^2 = \frac{1}{N}(1 - \frac{N}{n_i})^2$ are $O(1/N)$, since $\frac{N}{n_i} \rightarrow 1 / p_i$. Since the number of terms in each sum is $O(N)$, the total variance for fold $k$ is bounded by a constant multiple of the average stability error:
$$\text{Var}(V_{N,k} \mid \Omega^{(k)}) \le C \cdot \text{Var}(\hat{S}_i^{(k)} - \hat{\bunderline{S}}_i \mid \Omega^{(k)})$$for some constant $C$, depending on the sampling fraction $p_i$. 

Substituting this back into the expectation of the concave function $\psi$:
$$\sum_{k=1}^K \mathbb{E}[\psi(|V_{N,k}|)] \le \sum_{k=1}^K \mathbb{E} \left[ \psi \left( \sqrt{C \cdot \text{Var}(\hat{S}_i^{(k)} - \hat{\bunderline{S}}_i \mid \Omega^{(k)})} \right) \right].$$

By Assumption~\ref{ass:stable}, $\sqrt{K \text{Var}(\hat{S}_i^{(1)} - \hat{\bunderline{S}}_i \mid \Omega^{(1)})} \xrightarrow{L_1} 0$  as $N \to \infty$. Since $\psi(x) = \min(1, x) \le x$ for $x \ge 0$, and the expectation of the square root vanishes, it follows that:
$$\mathbb{E}[\psi(|V_N|)] \le \sum_{k=1}^K \mathbb{E}[\psi(|V_{N,k}|)] \to 0.$$
By Fact 1, $\mathbb{E}[\min(1, |V_N|)] \to 0$ implies $V_N \xrightarrow{p} 0$. Thus, $\sqrt{N}(\hat{\theta}_i - \tilde{\theta}_i) \xrightarrow{p} 0$ for each $i \in [M]$. Since $M$ is finite, the vector convergence $\sqrt{N}(\hat{\theta} - \tilde{\theta}) \xrightarrow{p} {0}$ holds simultaneously.
\end{proof}

We can now prove Theorem~\ref{thm:collab_clt}.

\begin{proof}
Applying Slutsky's Theorem with the results of Lemmas~\ref{lemm:idealized} and \ref{lemma:convergence} yields:
$$\sqrt{N}(\hat{\theta} - \theta) \xrightarrow{d} \mathcal{N}(0, \Sigma).$$
\end{proof}





\subsection{Proof of Corollary~\ref{cor:linear_functions}}
\begin{proof}
Follows directly from Theorem~\ref{thm:collab_clt} and the fact that
\begin{equation}
X \xrightarrow{d} \mathcal{N}(\mu, \Sigma) \Longrightarrow a^\top X \xrightarrow{d} \mathcal{N}(a^\top \mu, a^\top \Sigma a).
\end{equation}
for constant vector $a \in \mathbb{R}^M$.
\end{proof}

\subsection{Proof of Corollary~\ref{cor:collab_ci}}
\begin{proof}
This follows from Theorem~\ref{thm:collab_clt} and Corollary~\ref{cor:linear_functions}, together with Slutsky's Theorem.
\end{proof}

\subsection{Proof of Theorem~\ref{thm:empirical_cov}}
\begin{proof}
We prove this by establishing the convergence in probability of the individual components of $\hat{\Sigma}_{ij}$ to constants, and then applying Slutsky's Theorem.

By Assumption \ref{ass:stable}, $\sqrt{K \mathrm{Var}( \hat{S}^{(k)}_{ij} - \hat{\bunderline{S}}_{ij} \mid \Omega^{(k)} )} \xrightarrow{L_1} 0$. Because $\hat{\bunderline{S}}_{ij}$ is the deterministic population limit, the conditional bias $\mathbb{E}[\hat{S}^{(k)}_{ij} - \hat{\bunderline{S}}_{ij} \mid \Omega^{(k)}]$ also vanishes. This combined with strictly bounded evaluation scores guarantees unconditional mean-square ($L_2$) convergence, which implies the convergence in probability of the individual fold predictions: $\hat{S}^{(k)}_{ij} \xrightarrow{p} \hat{\bunderline{S}}_{ij}$. Since the final prediction $\hat{Y}_{ij}$ is constructed via finite sums of these fold predictions, the Continuous Mapping Theorem yields $\hat{Y}_{ij} \xrightarrow{p} \hat{\bunderline{S}}_{ij}$.

Finally, all terms are bounded. The empirical covariance of bounded i.i.d. variables is a consistent estimator of the true covariance; substituting the converged predictions $\hat{Y}_{ij}$ preserves this. Since the deterministic sampling weights also converge to constants, Slutsky's Theorem gives $\hat{\Sigma}_{ij} \xrightarrow{p} \Sigma_{ij}$.
\end{proof}

\subsection{Proof of Proposition~\ref{cor:optimal_lambda}}
\label{sec:proof_optimal_lambda}
\begin{proof}
We begin with $\hat{\theta}_i^{\lambda_i}$. From Corollary~\ref{cor:linear_functions}, the asymptotic variance as a function of $\lambda$ is 
$$V_{\hat{\theta}_i}(\lambda_i) = \frac{1}{p_i}\mathrm{Var}(S_i) + \left(\frac{1}{p_i} - 1\right)\left[\lambda_i^2 \mathrm{Var}(\hat{\bunderline{S}}_i) - 2\lambda_i \mathrm{Cov}(S_i, \hat{\bunderline{S}}_i)\right].$$
Differentiating with respect to $\lambda_i$ and setting it to zero gives the optimal weight:
$$\lambda_i^* = \frac{\mathrm{Cov}(S_i, \hat{\bunderline{S}}_i)}{\mathrm{Var}(\hat{\bunderline{S}}_i)}.$$

We now derive the optimal weight for $\hat{\Delta}_{ij}^{\lambda_{ij}}$, again using Corollary~\ref{cor:linear_functions}. Let $\gamma_{ij} = \frac{p_{ij}}{p_ip_j} - 1$.

Proceeding as before:
\begin{align}
    V_{\hat{\Delta}_{ij}}(\lambda_i, \lambda_j) &= \frac{1}{p_i}\mathrm{Var}(S_i) + \gamma_{ii}\left[\lambda_i^2 \mathrm{Var}(\hat{\bunderline{S}}_i) - 2\lambda_i \mathrm{Cov}(S_i, \hat{\bunderline{S}}_i)\right] \\
    &+ \frac{1}{p_j}\mathrm{Var}(S_j) + \gamma_{ij}\left[\lambda_j^2 \mathrm{Var}(\hat{\bunderline{S}}_j) - 2\lambda_j \mathrm{Cov}(S_j, \hat{\bunderline{S}}_j)\right] \\
     &-2\left[\frac{p_{ij}}{p_ip_j}\mathrm{Cov}(S_i, S_j) + \gamma_{ij} \left[\lambda_i\lambda_j \mathrm{Cov}( \hat{\bunderline{S}}_i, \hat{\bunderline{S}}_j) - \lambda_i \mathrm{Cov}(\hat{\bunderline{S}}_i, S_j) -\lambda_j  \mathrm{Cov}(S_i, \hat{\bunderline{S}}_j)\right]\right]
\end{align}
Consolidating terms, and writing $\mathrm{Var}(\hat{\Delta}_{ij}^\mathrm{classic}) = \frac{1}{p_i}\mathrm{Var}(S_i) + \frac{1}{p_j}\mathrm{Var}(S_j) - 2\frac{p_{ij}}{p_ip_j}\mathrm{Cov}(S_i, S_j)$,
\begin{align}
  V_{\hat{\Delta}_{ij}}(\lambda_i, \lambda_j)   &= \mathrm{Var}(\hat{\Delta}_{ij}^\mathrm{classic}) \\
  &+ \underbrace{\lambda_i^2 \gamma_{ii} \mathrm{Var}(\hat{\bunderline{S}}_i) + \lambda_j^2 \gamma_{jj} \mathrm{Var}(\hat{\bunderline{S}}_j) - 2 \lambda_i \lambda_j \gamma_{ij} \mathrm{Cov}(\hat{\bunderline{S}}_i, \hat{\bunderline{S}}_j)}_{\lambda^\top Q \lambda} \\ 
     &- \underbrace{2 [\lambda_i (\gamma_{ii} \mathrm{Cov}(S_i, \hat{\bunderline{S}}_i) - \gamma_{ij} \mathrm{Cov}(\hat{\bunderline{S}}_i, S_j)) + \lambda_j (\gamma_{jj} \mathrm{Cov}(S_j, \hat{\bunderline{S}}_j) - \gamma_{ij} \mathrm{Cov}(S_i, \hat{\bunderline{S}}_j))]}_{2 \lambda^\top u}
\end{align}
where $Q$ and $u$ are defined as in Proposition~\ref{cor:optimal_lambda}.

To minimize $V(\lambda_i, \lambda_j)$, we again take the gradient with respect to $\lambda$ and set it to zero:
$$\nabla_\lambda V(\lambda) = 2Q\lambda - 2u = 0,$$
which gives $\lambda^* = Q^{-1}u$.
\end{proof}

\subsection{Proof of Proposition~\ref{prop:power_tuning_distribution}}
\begin{proof}
Slutsky's Theorem preserves convergence in distribution. The rewritten form of the asymptotic variance can then be seen from substituting  $\lambda_i^*\hat{\bunderline{S}_i}$ directly into Equation~\eqref{eq:limiting_cov} and simplifying. Following the derivations in the proof of Proposition~\ref{cor:optimal_lambda} in Section~\ref{sec:proof_optimal_lambda},
\begin{align}
V_{\hat{\theta}_i}(\lambda_i^*) &= \frac{1}{p_i}\mathrm{Var}(S_i) + \left(\frac{1}{p_i} - 1\right)\left[\frac{\mathrm{Cov}(S_i, \hat{\bunderline{S}}_i)^2}{\mathrm{Var}(\hat{\bunderline{S}}_i)^2} \mathrm{Var}(\hat{\bunderline{S}}_i) - 2\frac{\mathrm{Cov}(S_i, \hat{\bunderline{S}}_i)}{\mathrm{Var}(\hat{\bunderline{S}}_i)}\mathrm{Cov}(S_i, \hat{\bunderline{S}}_i)\right] \\
&=\mathrm{Var}(\hat{\theta}_i^\mathrm{classic}) - \left(\frac{1}{p_i} - 1\right)\frac{\mathrm{Cov}(S_i, \hat{\bunderline{S}}_i)^2}{\mathrm{Var}(\hat{\bunderline{S}}_i)},
\end{align}
and
\begin{align}
V_{\hat{\Delta}_{ij}}(\lambda^*_i, \lambda^*_j) &=\mathrm{Var}(\hat{\Delta}_{ij}^\mathrm{classic}) + (\lambda^*)^\top Q \lambda^* - 2(\lambda^*)^\top u \\
&= \mathrm{Var}(\hat{\Delta}_{ij}^\mathrm{classic}) + (Q^{-1}u)^\top Q Q^{-1}u - 2(Q^{-1}u)^\top u \\
&= \mathrm{Var}(\hat{\Delta}_{ij}^\mathrm{classic}) - u^\top Q^{-1} u,
\end{align}
since $Q^{-1}$ is symmetric.
\end{proof}
\end{document}